\newtheorem{theorem}{Theorem}
\newtheorem{lemma}[theorem]{Lemma}
\newtheorem{proposition}[theorem]{Proposition}
\newtheorem{corollary}[theorem]{Corollary}
\newtheorem{definition}{Definition}
\newtheorem{example}{Example}
\newcommand{\adom}[1]{\mathsf{terms}(#1)}
\newcommand{\I}{I}
\newcommand{\Ia}{I_a}
\newcommand{\ochase}{\mathsf{o}\textsf{-}\mathsf{chase}}
\newcommand{\soochase}{\mathsf{(s)o}\textsf{-}\mathsf{chase}}
\newcommand{\sochase}{\mathsf{so}\textsf{\hspace{-.5px}-}\mathsf{chase}}
\newcommand{\anychase}{\star\textsf{\hspace{-.5px}-}\mathsf{chase}}
\newcommand{\anychaseIS}[2]{\anychase(#1,#2)}
\newcommand{\anychasek}[3]{\anychase^{#3}(#1,#2)}
\newcommand{\anychaseshort}{\anychaseIS{\I}{\ruleset}}
\newcommand{\anychasekshort}[1]{\anychasek{\I}{\ruleset}{#1}}
\newcommand{\ochaseIS}[2]{\ochase(#1,#2)}
\newcommand{\ochasek}[3]{\ochase^{#3}(#1,#2)}
\newcommand{\ochaseshort}{\ochaseIS{\I}{\ruleset}}
\newcommand{\ochasekshort}[1]{\ochasek{\I}{\ruleset}{#1}}
\newcommand{\sochaseIS}[2]{\sochase(#1,#2)}
\newcommand{\sochasek}[3]{\sochase^{#3}(#1,#2)}
\newcommand{\sochaseshort}{\sochaseIS{\I}{\ruleset}}
\newcommand{\sochasekshort}[1]{\sochasek{\I}{\ruleset}{#1}}
\newcommand{\valdepth}[1]{
\mathsf{depth}_\exists(#1)}
\newcommand{\frdepth}[1]{
\mathsf{depth}^{\mathsf{fr}}_\exists(#1)}
\newcommand{\rank}[1]{\mathsf{rank}(#1)}
\renewcommand{\hom}{h}
\newcommand{\map}{\varphi}
\newcommand{\body}{B}
\newcommand{\head}{H}
\newcommand{\match}{\pi}
\newcommand{\safeo}{\match_{\mathsf{o}}}
\newcommand{\safeso}{\match_{\mathsf{so}}}
\newcommand{\safeany}{\match_{\star}}
\newcommand{\erule}{\sigma}
\newcommand{\ruleset}{\Sigma}
\newcommand{\fr}[1]{\mathsf{fr}(#1)}
\newcommand{\ex}[1]{\mathsf{ex}(#1)}
\newcommand{\freeze}[1]{\mathsf{freeze}(#1)}
\newcommand{\cto}{\mathsf{CT}^{\mathsf{o}}}
\newcommand{\ctso}{\mathsf{CT}^{\mathsf{so}}}
\newcommand{\ctany}{\mathsf{CT}^{\mathsf{\star}}}
\newcommand{\bno}{\mathsf{BN}^{\mathsf{o}}}
\newcommand{\bnso}{\mathsf{BN}^{\mathsf{so}}}
\newcommand{\bnany}{\mathsf{BN}^{\mathsf{\star}}}
\newcommand{\forw}{\mathsf{FO}\mbox{-}\mathsf{R}}
\newcommand{\forwaf}{\mathsf{FO}\mbox{-}\mathsf{R}^{\mathsf{AF}}}
\newcommand{\atom}{\alpha}
\newcommand{\query}{Q}
\newcommand{\const}{a}
\newcommand{\Const}{\mathcal{C}}
\newcommand{\Pred}{\mathcal{P}}
\newcommand{\Voc}{\mathcal{V}}
\renewcommand{\t}{v}
\newcommand{\kd}{k_{d}}
\newcommand{\kfo}{k_\mathrm{FO}}
\newcommand{\kaf}{k_\mathrm{AF}}
\newcommand{\forwany}{\mathsf{FO}\mbox{-}\mathsf{R}^{\mathcal{C}}}
\newcommand{\DF}[1]{\mathsf{DF}(#1)}
\newcommand{\foquery}{\phi}
\title{Oblivious and Semi-Oblivious Boundedness for Existential Rules}
\author{
Pierre~Bourhis$^{1,3,4}$\and
Michel~Leclère$^{2,4}$\and 
Marie-Laure~Mugnier$^{2,4}$\and
\\
Sophie~Tison$^{3,4}$\and
Federico~Ulliana$^{2,4}$\And
Lily Galois$^{3,4}$
\affiliations
$^1$ CNRS, France
\\$^{2}$ Univ. Montpellier, LIRMM, France
\\$^3$ Univ. Lille, CRIStAL, France
\\$^{4}$ Inria, France
\emails
\{firstname.lastname\}@inria.fr
}
\begin{document}

\maketitle

\begin{abstract}
We study the notion of boundedness in the context of positive existential rules, that is, whether there exists an upper bound to the depth of the chase procedure, that is independent from the initial instance. By focussing our attention on the oblivious and the semi-oblivious chase variants, we give a characterization of boundedness in terms of FO-rewritability and chase termination. We show that it is decidable to recognize if a set of rules is bounded for several classes and outline the complexity of the problem. 

\medskip
\emph{This report contains the paper published at IJCAI 2019 \emph{\cite{DBLP:conf/ijcai/BourhisLMTUG19}}  and an appendix with full proofs.}

\end{abstract}

\section{Introduction}
We consider the setting of ontology-based query answering (OBQA) in which answers to conjunctive queries are logically entailed 
from a knowledge base constituted of a set of facts (or database instance) and an ontology. 
Existential rules, also known as Tuple Generating Dependencies (TGDs) in database theory, 
are an expressive knowledge 
representation language well studied in the OBQA setting \cite{DBLP:books/sp/virgilio09/CaliGL09,DBLP:journals/ai/BagetLMS11,DBLP:journals/jair/CaliGK13}. 
These rules generalize function-free Horn rules (like those of datalog)
with existentially quantified variables in the rule heads, 
which allow one to assert the existence of unknown individuals, 
and hence to reason in open domains. 
Beside datalog, existential rules generalize the Semantic Web language RDF Schema, as well as most Description Logics used in the OBQA context, namely Horn description logics, in particular those at the core of the tractable profiles of the ontological language OWL 2.

The two main approaches developed to answer conjunctive queries on existential rules knowledge bases are \emph{materialization} and \emph{query rewriting}. 
Both can be seen as ways of reducing query answering to a classical database query evaluation problem.  
Materialization relies on a forward chaining technique, called the \emph{chase}, that consists in expanding the database instance with the facts entailed by rules until fixpoint. 
 In contrast, 
 query rewriting is a backward chaining mechanism 
 that consists in rewriting an input query using relevant rules,
  so that its answers on the knowledge base are exactly the answers of the rewritten query on the database instance alone. 
Query answering being undecidable for existential rules, both materialization and query rewriting may not terminate. 


This led to intensive research aiming at characterizing decidable and tractable classes of existential rules. Several syntactic restrictions were proposed to ensure chase termination (e.g. weak-acyclicity \cite{FaginKMP05}) or the existence of a (finite) first-order rewriting of a  conjunctive query, a property referred as \emph{FO-rewritability} \cite{DBLP:journals/jar/CalvaneseGLLR07}.
Nevertheless, the interactions between chase termination and FO-rewritability have been little investigated so far,
and not much is known for existential rules on which both hold. 
What are the relationships between these two properties?

Answering this question leads us to another fundamental problem, which
has been extensively studied for datalog, namely \emph{(uniform) boundedness} \cite{hillebrand1995undecidable}. 
Boundedness concerns the recursivity of rules, and asks whether there is an upper bound on the depth of the chase, which is independent from any database instance.
The property is key for practical optimization of reasoning as it implies that the ruleset is essentially non-recursive (although syntactic conditions may fail to capture this). 
It is known that boundedness and FO-rewritability are equivalent in the case of datalog \cite{ajtai1994datalog}, but this does not hold for existential rules. 
In this setting, the notion of boundedness also depends on the chase variant as they all behave differently with respect to termination.

We focus our attention on 
the \emph{oblivious} and \emph{semi-oblivious} (a.k.a. Skolem)
chase  \cite{DBLP:conf/pods/Marnette09}. 
As a matter of fact, almost all known sufficient conditions for chase termination fall within these chase variants
(from the simplest ones: rich-acyclicity \cite{DBLP:conf/pods/HernichS07}, weak-acyclicity \cite{FaginKMP05}  and  acyclic-GRD \cite{DBLP:journals/ai/BagetLMS11} to the more general MFA \cite{DBLP:journals/jair/GrauHKKMMW13}), at the exception of the recent work of \cite{DBLP:conf/ijcai/CarralDK17} which applies to the restricted chase variant. 
Importantly, we consider a \emph{breadth-first} version for both variants, which ensures the minimal depth of the chase \cite{RR18}.\footnote{See \cite{DBLP:journals/corr/abs-2004-10030} for an extended version of this conference paper, to appear in Theory and Practice of Logic Programming (added note  w.r.t. IJCAI 2019 paper).}

Our main contribution is 
a characterization of boundedness
in terms of  chase termination \emph{and} FO-rewritability.
This means that a set of rules is bounded if and only if it ensures both chase termination for any instance and FO-rewritability  for any conjunctive query. 
We show this by proving two orthogonal results.
The first is a bound on the depth of existential variables when the chase terminates on all instances. 
The second is a bound on the (breadth-first) rank at which facts using terms of a given depth are inferred. 

This connection reveals important differences between the two variants.
For the oblivious case we show that, when chase termination holds,  FO-rewritability on \emph{full-atomic queries} (queries with a single atom and only answer variables) is equivalent with FO-rewritability.  
Moreover, for the case of \emph{fully-existential rules} (rules where all head atoms have at least one existential variable), we show that chase termination is equivalent to boundedness and so it implies FO-rewritability. None of these properties hold for the semi-oblivious chase. 


Recognizing if a set of existential rules is bounded is undecidable already for datalog \cite{hillebrand1995undecidable}.
However, we show the decidability of the problem 
for major classes of existential rules as direct corollaries of our characterizations and existing results from the literature. Precisely, the problem is PSpace-complete for linear and sticky rules and in 2Exptime for guarded rules. Finally, we consider the \emph{$k$-boundedness} problem (i.e., whether the chase terminates in $k$ steps on all instances), which was recently proven decidable for several chase variants, including those investigated here  \cite{RR18}.  We show that deciding if a ruleset is $k$-bounded is in 2Exptime for the breadth-first {(semi-)} oblivious chase 
and co-NExptime-complete for datalog. 

\medskip
\emph{Proofs omitted due to space limitations are detailed in the appendix. }

\section{Preliminary Definitions}
We consider a relational vocabulary $\Voc=(\Pred, \Const)$ constituted of a finite set of predicates $\Pred$ and a finite set of constants $\Const$. 
A term $\t$ is a constant of $\Const$ or a variable. 
An atom is of the form $p(\t_1 \dots \t_k)$ where $p$ is a predicate of arity $k$ and the $\t_i$ are terms.
We denote by $\adom{}$ the set of its terms  and extend the notation to sets of atoms. 
An \emph{embedding} 
$\map$ from a set of atoms $A$ to a set of atoms $A'$ is a substitution of $\adom{A}$ with $\adom{A'}$ such that 
$\map(A)\subseteq A'$. A \emph{homomorphism} is an embedding which is the identity on constants.

An \emph{instance} $\I$ is a conjunction of atoms on constants and (globally) existentially quantified variables.
It is finite unless otherwise specified.
Throughout this paper, we see an instance $\I$ as the set of its atoms and call \emph{fact} any atom $f$ that belongs to this set.
Given a finite set $\Pred$ of predicates, the \emph{critical instance} $\Ia$ is composed of all facts built on $\Pred $ and special constant $a$. 
Any instance $\I$ on $\Pred$ can be embedded into $\Ia$.

An \emph{existential rule} $\erule $ is a closed formula $\forall\bar{x} \bar{y}(\body[\bar{x},\bar{y}]\rightarrow\exists \bar{z}\head[\bar{x},\bar{z}]$
where $\body$ and $\head$ are sets of atoms built on variables called the \emph{body} and the \emph{head} of the rule, also denoted by $\mathsf{body}(\erule)$ and $\mathsf{head}(\erule)$ respectively. 
The set of variables $\bar x$ shared by $\body$ and $\head$ is called the \emph{frontier} of the rule and is denoted by $\fr{\erule}$.
The set of variables $\bar z$ that  belong to $\head$ only are called \emph{existential variables} and are denoted by $\ex{\erule}$. 
Universal quantifiers will often be omitted in the remainder of the paper.
A rule such that $\ex{\erule}=\emptyset$ is called \emph{datalog}.
A rule where all head atoms contain at least one existential variable is called \emph{fully-existential} and denoted by \emph{FE-rule}.
We say that a rule $\erule$ is applicable on an instance $\I$ if there is a homomorphism $\match$ from $\mathsf{body}(\erule)$ to $\I$ and call the pair  $(\erule,\match)$ a \emph{trigger} of $\I$.
Given a trigger $(\erule,\match)$, we denote by ${\match}_{|\mathsf{fr}(\erule)}\subseteq \match$ the restriction of $\match$ to  $\fr{\erule}$.

A \emph{knowledge base} (KB) is a pair $(\I,\ruleset)$ where $\I$ is an instance and $\ruleset$ a set of existential rules.
The chase is a fundamental tool for computing logical consequences from a KB since, when it terminates, it computes a universal model of the KB, i.e., a model that maps by homomorphism to any other model of the KB (with a model being seen here as an instance). 
In this work, we focus our attention on the breadth-first oblivious ($\ochase$) and semi-oblivious ($\sochase$) variants. As discussed in Section
\ref{subsection-bf}, the breadth-first behavior is particularly interesting when studying boundedness.


\begin{definition}
Let $(\I,\ruleset)$ be a knowledge base and  $\star\in\{\mathsf{o},\mathsf{so}\}$ a chase variant.
Then, the \emph{breadth-first} $\anychase$ is defined as follows: $\anychasek{\I}{\ruleset}{0}=I$ and for all saturation rank $i\geq0$
$$
\anychasekshort{i+1}=
\anychasekshort{i}\;\cup \bigcup_{(\erule,\match)} \safeany(\mathsf{head}(\erule))
$$
where $(\erule,\match)$ is any trigger of $\anychasekshort{i}$
and $\safeany\supseteq \match$  a substitution  that replaces each existential variable $z\in\ex{\erule}$ with a fresh variable named as follows: 
\begin{itemize}
\item $\safeo(z)= z_{(\erule,\match)}$ 
\item  $\safeso(z)= z_{(\erule,{\match}_{|\mathsf{fr}(\erule)})}$
\end{itemize}
Then, we define $\anychaseshort=\bigcup^{\infty}_{i\geq0}\anychasekshort{i}$.
The $\star$-chase \emph{terminates} on $(\I,\ruleset)$ if there is a rank $k$  
with $\anychaseshort$ = $\anychasekshort{k}$. 

\end{definition}

Note that for the $\ochase$ fresh variables are named by the trigger from which they have been generated. 
Instead, for the $\sochase$ the naming only depends on the frontier-restriction of the homomorphism of the trigger.
This means that any two triggers having the same rule and agreeing on the image of its frontier variables produce \emph{equal} results,
hence only one of them is actually considered by the $\sochase$.
The $\sochase$ is very close to the \emph{Skolem chase}, which relies on a skolemisation of the rules: first, each rule
$\erule$ is transformed by replacing each occurrence of an existential variable $z$ with a
functional term $f^\erule_z(\fr{\erule})$ on the frontier of $\erule$; then the $\ochase$ is run on the skolemised rules. 
At each saturation rank, the Skolem chase produces a result isomorphic to that of the $\sochase$ (up to the renaming of each
Skolem term by the corresponding fresh variable), hence the forthcoming results on the $\sochase$ also hold for the Skolem chase. 

\begin{example} \label{ex-chase} Consider the rule $
\erule = p(x,y) \rightarrow \exists z~ p(x,z)$. 
Then $\ochaseshort$ with $\I = \{p(a,b)\}$ and $\ruleset=\{\erule\}$ is infinite - as the chase does not terminate. The atom $p(a,z_{(\erule,\match_1)})$ with $\match_1 = \{x\mapsto a, y \mapsto b\}$ is first inferred, then  
$p(a,z_{(\erule,\match_2)})$ with $\match_2 = \{x\mapsto a, y \mapsto z_{(\erule,\match_1)}\}$, and so on. Here, each rule application enables a new trigger.
In contrast, $\sochaseshort$ is finite, in that only the first rule application will be performed, producing  $p(a,z_{(\erule,\{x\mapsto a\})})$, since all triggers map the frontier variable $x$ to $a$. For the Skolem chase,  $\erule$ is rewritten as $\erule' = p(x,y) \rightarrow p(x,f^\erule_z(x))$. The first rule application according to trigger $(\erule', \match_1)$  produces $p(a,f^\erule_z(a))$, then the chase halts as the same atom is produced by the next trigger.   
\end{example}

\begin{definition}
\label{def:rank}
The \emph{rank} of a fact $f\in\anychaseshort$, 
denoted by $\rank{f}$, is $0$ if $f\in\I$ and $1+\max\{\rank {f'} | f' \in \match(\mathsf{body}(\erule))\}$
 if $f$ is produced by the trigger $(\erule, \match)$. 
This definition is naturally extended to terms and sets of facts.
The \emph{rank} of $\anychaseshort$ is the smallest $k$ such that $\anychaseshort$ = $\anychasekshort{k}$ if $\anychaseshort$ terminates, and it is infinite otherwise. 
\end{definition}
Note that for the breadth-first chases we consider the above definition implies that $\rank{f}$ is the smallest $k$ such that $f\in\anychasekshort{k}\setminus\anychasekshort{k-1}$.

An \emph{FO-query} $\foquery(x_1,...,x_n)$ is a (function free) first-order formula whose free variables (called answer variables) are exactly $\{x_1,...,x_n\}$.
A \emph{conjunctive query} (CQ) is an FO-query which is an existentially quantified conjunction of atoms. 
An \emph{atomic} query is a CQ with a single atom. A \emph{full-atomic} query is an atomic query where all terms are free variables.  
A query is called  \emph{Boolean} if it does not have any free variable. 
%
As for instances, it will be handful to see CQs as sets of atoms, of course by distinguishing the answer variables.
A \emph{union of conjunctive queries} (UCQ) $\mathcal \query$ is a disjunction of CQs with the same free variables, also seen as a set of CQs. 

A tuple of constants $(\const_1, ..., \const_n)\in\Const^n$  is an \emph{answer} to a CQ $\query(x_1,...,x_n)$ on an instance $\I$  if there is a homomorphism $\hom$ from $\query$ to $\I$ such that $\hom(x_i) = a_i$ for $1 \leq i \leq n$. 
Equivalently, $\I\models\query[x_i \mapsto a_i]$, where $\models$ denotes the classical logical consequence and
$\query[x_i \mapsto a_i]$ is the Boolean query obtained from $\query$ substituting each $x_i $ with $a_i$.
A tuple of constants $(\const_1, ..., \const_n)\in\Const^n$  is a \emph{certain answer} to $\query$ on a KB $(\I, \ruleset)$ if $\I, \ruleset \models Q[x_i \mapsto a_i]$.  This is equivalent to the existence of a saturation rank  $k$ such that $\anychasekshort{k}\models\query[x_i \mapsto a_i]$. In other words,  the certain answers to $Q$ on $(\I, \ruleset)$ are exactly its answers on the possibly infinite instance $\anychaseshort$. The set of (certain) answers to a UCQ $\mathcal \query$ is the union of the sets of (certain) answers to the CQs it contains.

\subsection{Termination vs Boundedness}
To begin our study, we need to present the relationships between chase termination and boundedness. 
Let $\star\in\{\mathsf{o},\mathsf{so}\}$ be a chase variant, the $\star$-chase termination class, denoted by $\ctany$, contains all rulesets $\ruleset$ such that $\anychaseshort$ terminates for all instances $\I$. The $\star$-boundedness class, denoted by $\bnany$, contains all bounded rulesets $\ruleset$, i.e., for which there exists an integer $k$ such that 
$\anychasekshort{k}=\anychaseshort$ for all instances $\I$ . Obviously, $\bnany \subset \ctany$. 

\begin{example}\label{ex-bounded-termination} Let $\erule_1 = p(x,y) \land p(y,z) \rightarrow p(x,z)$ and $\erule_2 = p(x,y) \land p(w,z) \rightarrow p(x,z)$. 
Because both rules are datalog, $\{\erule_1\} \in \ctany$ and $\{\erule_2\} \in \ctany$.  
However, $\ruleset=\{\erule_1\} \not \in \bnany$, since the rank of $\anychaseshort$ depends on $\I$. In contrast, $\{\erule_2\} \in \bnany$ and the bound is $k = 1$. Similarly, 
$\{\sigma_1,\sigma_2\} \in \bnany$. Indeed, $\erule_2$ produces at the first rank all atoms that can be produced by $\erule_1$ at later ranks.
\end{example}

To get a better understanding of boundedness, it will be useful 
to decompose each rule of a set thereby distinguishing between its ``datalog part'' and its ``existential part''.  For instance, a rule of the form $p(x,y) \rightarrow \exists z~p(x,z) \land q(x)$ can be decomposed into a datalog rule $p(x,y) \rightarrow q(x)$ and an FE-rule $p(x,y) \rightarrow \exists z~p(x,z)$. 
Let $\erule$ be any existential rule
  of the form $\body \rightarrow \head_F \land \head_D$
where $B$ is the set of body atoms, $\head_F$ is the set of head atoms with at least one existential variable and $\head_D$ are the remaining head atoms.
The datalog-fully existential decomposition of $\erule$, denoted by $\DF{\erule}$, returns a set made of the FE-rule 
$\body \rightarrow \head_F$ 
together with a (single head) datalog rule of the form $\body \rightarrow {\head}_D^i$, for each  $\head_D^i\in\head_D$. 
The definition is then extended to sets $\DF{\ruleset}=\bigcup_{\erule\in\ruleset}\DF\erule$.
This decomposition preserves 
boundedness and termination of the oblivious chase.\footnote{This is not true for the $\sochase$. For instance, for $\Sigma = \{ \sigma = p(x,y) \rightarrow \exists z ~p(x,z) \land q(x,y) \}$ and $I = \{p(a,b)\}$, $\sochase(\Sigma, I)$ is infinite, while $\DF{\Sigma}$ is $\mathsf{so}$-bounded. This is due to the fact that $\sigma$ has frontier $\{x,y\}$, while the FE-rule $p(x,y) \rightarrow \exists z ~p(x,z)$ in $\DF{\erule}$ has frontier $\{x\}$. We correct here a wrong claim in Proposition 1 of IJCAI's paper, which has no incidence on the paper's results.}

\begin{proposition} \label{prop-DF}
$\ruleset\in\cto$ iff $\DF\ruleset\in\cto$ and 
$\ruleset\in\bno$ iff $\DF\ruleset\in\bno$.
\end{proposition}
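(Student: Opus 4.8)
The plan is to prove something slightly stronger than the statement, namely that for every instance $\I$ the oblivious chase of $(\I,\ruleset)$ and that of $(\I,\DF\ruleset)$ proceed in lockstep: at every rank $i$ the two results $\ochasek{\I}{\ruleset}{i}$ and $\ochasek{\I}{\DF{\ruleset}}{i}$ are isomorphic through a bijective renaming of the existential variables (nulls) that is the identity on $\adom{\I}$ and on constants. From such a family of isomorphisms both equivalences follow at once: the termination rank of the two chases coincides for each $\I$, so $\ruleset\in\cto$ iff $\DF\ruleset\in\cto$; and since the ranks coincide for \emph{every} $\I$, a uniform bound holds for one iff it holds for the other, giving $\ruleset\in\bno$ iff $\DF\ruleset\in\bno$. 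I would stress at the outset that this lockstep relies crucially on the oblivious naming scheme $\safeo(z)=z_{(\erule,\match)}$, which records the whole trigger; the footnote's counterexample shows it fails for $\sochase$ precisely because splitting a rule shrinks its frontier.

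The heart of the argument is an induction on the rank $i$ building the renaming $\rho_i$. Fix a rule $\erule=\body\rightarrow\head_F\land\head_D$ of $\ruleset$; $\DF\erule$ consists of the FE-rule $\erule_F=\body\rightarrow\head_F$ and, for each atom $\head_D^j$ of $\head_D$, a datalog rule $\erule_D^j=\body\rightarrow\head_D^j$. All of these share the body $\body$, so a substitution $\match$ is a trigger of $\erule$ on an instance exactly when it is simultaneously a trigger of $\erule_F$ and of every $\erule_D^j$. For the base case $\rho_0$ is the identity on $\I$. For the step, assume $\rho_i\colon\ochasek{\I}{\ruleset}{i}\rightarrow\ochasek{\I}{\DF{\ruleset}}{i}$ is an isomorphism. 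Then $\match\mapsto\rho_i\circ\match$ is a bijection between the triggers at rank $i$ on the two sides. Applying $(\erule,\match)$ on the left produces $\safeo(\head_F)\cup\match(\head_D)$, with existential variables named $z_{(\erule,\match)}$; applying the corresponding triggers $(\erule_F,\rho_i\circ\match)$ and the $(\erule_D^j,\rho_i\circ\match)$ on the right produces the same atoms up to renaming the existentials to $z_{(\erule_F,\rho_i\circ\match)}$ and applying $\rho_i$ on the frontier images. I would therefore extend $\rho_i$ to $\rho_{i+1}$ by setting $\rho_{i+1}(z_{(\erule,\match)})=z_{(\erule_F,\rho_i\circ\match)}$ for each existential variable $z$ and each rank-$i$ trigger, and check that this is a well-defined bijection carrying $\ochasek{\I}{\ruleset}{i+1}$ isomorphically onto $\ochasek{\I}{\DF{\ruleset}}{i+1}$.

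The main points requiring care, and where I expect the real work to lie, are the well-definedness and bijectivity of $\rho_{i+1}$. Because the chase is breadth-first, triggers from earlier ranks remain triggers; in the oblivious chase re-applying them regenerates exactly the same named atoms, so $\rho_{i+1}$ never conflicts with $\rho_i$ on previously created variables. Injectivity and surjectivity of the new part follow from the fact that the oblivious name $z_{(\erule,\match)}$ determines the trigger uniquely, and that $\rho_i$ is already a bijection on rank-$i$ terms, so $\match\mapsto\rho_i\circ\match$ is onto the rank-$i$ triggers of $\DF\ruleset$ (its inverse being $\nu\mapsto\rho_i^{-1}\circ\nu$). Finally I would observe that the one subtlety about the datalog part, namely that $\head_D$ is split into several single-atom rules rather than kept as a single rule $\body\rightarrow\head_D$, is harmless here: all these rules share the body $\body$ and hence fire together at the same rank, so the rank of every produced fact is preserved and the lockstep is unaffected.
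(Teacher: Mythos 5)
Your overall strategy is the paper's own: the paper dispatches the proposition in one line by asserting that $\ochasek{\I}{\ruleset}{i} = \ochasek{\I}{\DF\ruleset}{i}$ for every instance $\I$ and every rank $i$, and your rank-by-rank lockstep is a careful elaboration of exactly that claim. You are in fact more precise than the paper, since literal equality cannot hold (the nulls are named $z_{(\erule,\match)}$ on one side and $z_{(\erule_F,\match)}$ on the other), so an isomorphism built by induction on the rank is the right thing to construct, and your observation that all rules in $\DF\erule$ share the body $\body$ and hence fire in lockstep is the key point.

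There is, however, one step that fails as written: the injectivity of $\rho_{i+1}$. If two distinct rules $\erule_1,\erule_2\in\ruleset$ have the same body and the same set of head atoms containing an existential variable --- e.g. $\erule_1 = p(x,y)\rightarrow\exists z\,(p(x,z)\wedge q(x))$ and $\erule_2 = p(x,y)\rightarrow\exists z\,(p(x,z)\wedge r(x))$ --- then $\DF{\erule_1}$ and $\DF{\erule_2}$ contribute the \emph{same} FE-rule $\erule_F = p(x,y)\rightarrow\exists z\,p(x,z)$ to the set union $\DF\ruleset$. Your definition sends the two distinct nulls $z_{(\erule_1,\match)}$ and $z_{(\erule_2,\match)}$ to the single null $z_{(\erule_F,\rho_i\circ\match)}$, so $\rho_{i+1}$ is not injective and the two instances are not isomorphic: already at rank $1$ from $\{p(a,b)\}$ the chase on $\ruleset$ has one more atom than the chase on $\DF\ruleset$. (The datalog side of the decomposition is immune to this, as you note, because no fresh terms are created there.) The proposition itself survives, and the repair is cheap: either regard the rules of $\DF\ruleset$ as tagged by their rule of origin (a disjoint union rather than a set union), in which case your bijection goes through verbatim; or weaken ``isomorphism'' to ``surjective homomorphism that preserves and reflects ranks and has finite fibers'', checking by induction on existential depth that each null of $\ochasek{\I}{\DF\ruleset}{i}$ has only finitely many preimages --- this is still enough to transfer both termination and a uniform rank bound in the two directions. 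Either way, you should state which convention you adopt.
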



\section{Upper Bounding the Chase Depth}

Our approach consists of defining a notion of \emph{existential depth} for facts, proper to each chase, which is 
 finite on a given instance if and only if the chase terminates on that instance. Then we show that for each chase, 
the existential depth of all facts produced by the chase for a given ruleset are bounded by those of the critical instance. This means that whenever the chase terminates on the critical instance there is an upper bound to the existential depth of the facts, for all instances.
In the next section, with these results in hand, we use FO-rewritability to bound the \emph{rank} at which any fact of a certain existential depth will be inferred. This will give us a characterization of boundedness for the oblivious and $\sochase$ in terms of FO-rewritability and chase termination.

\subsection{The Oblivious Case}

Intuitively, the notion of existential depth of a term measures the number of fresh variable generation steps that led to the creation of this term.  

\begin{definition}
The \emph{existential depth} (or simply \emph{depth}) \emph{of a term}  $\t$ that belongs to $\ochaseshort $ is
$$\valdepth{v}=
\left\{
\begin{array}{lr}
0&\mbox{if } v\in \adom{\I}
\\
1+\max\{\;\valdepth{v_B}\;\}
&\mbox{otherwise} 
\end{array}
\right.
$$
where $v_B$ is any term in ${\match(\mathsf{body}(\erule))}$ used by 
 a trigger $(\erule,\match)$ which generates $\t$.
The \emph{existential depth} of a fact $f$ is the maximum existential depth of its terms.
The \emph{existential depth} of $\ochaseshort$ is the maximum existential depth of its facts if it  is finite  and is infinite otherwise.
\end{definition}

To illustrate the definition, consider Example \ref{ex-chase}.
The existential depth of terms in $\ochase(\I, \{\erule\})$ is unbounded, which is in line with the non-termination of the $\ochase$ 
on $(\I, \{\erule\})$. 
The rule
 $\sigma_1$ in Example \ref{ex-bounded-termination}.
 shows the difference between rank and existential depth. For any $\I$, the existential depth of terms (hence facts) is $0$ because $\sigma_1$ is datalog, however their rank depends on $\I$. More generally, for any term $\t$ and fact $f$ in $\ochaseshort $ it holds that $\valdepth{v}\leq\rank{v}$ and $\valdepth{f}\leq\rank{f}$. 
 Hence, if $\ochaseshort $ terminates,  its existential depth is finite. Reciprocally, when  the existential depth of  $\ochaseshort $ is finite, so it is the number of its terms, and $\ochaseshort$ terminates. 
%
We point out that  when dealing with sets of \emph{FE-rules} the notions of rank and existential depth coincide, as illustrated by Example \ref{ex-chase}.
\begin{proposition}
\label{prop-o-depth-rank}
If $\ruleset$ is a set of FE-rules then, for all instance $\I$ and term $\t$ in $\ochaseshort$,
 holds that $\valdepth{v}=\rank{v}$.
\end{proposition}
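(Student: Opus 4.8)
The plan is to prove the equality through its two inequalities. One direction, $\valdepth{\t}\le\rank{\t}$, holds for every term of $\ochaseshort$ and is recalled just above the statement, so the substance lies in the reverse inequality $\rank{\t}\le\valdepth{\t}$, which is exactly where the fully-existential hypothesis must intervene. Concretely I would prove the full equality in one shot, by strong induction on $\rank{\t}$, exploiting that the recursive definitions of rank and of existential depth are structurally identical once one is allowed to pass from facts to the terms occurring in them.

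The crux, and the only place where fully-existentiality is used, is the following lemma: for a set of FE-rules, every fact $f\in\ochaseshort$ satisfies $\rank{f}=\max\{\rank{\t}\mid \t\in\adom{f}\}$. I would establish it by a short direct argument. A fact of $\I$ has rank $0$ and contains only rank-$0$ terms. A fact $f$ produced by a trigger $(\erule,\match)$ at rank $r$ contains, by the FE hypothesis, at least one fresh existential variable $z_{(\erule,\match)}$ (because every head atom of an FE-rule carries an existential variable), and this variable is created precisely at rank $r$, hence has rank $r$. Since head atoms are built only on variables, every remaining term of $f$ is a frontier image $\match(x)$ that already occurs in the body facts $\match(\mathsf{body}(\erule))$, all of rank $\le r-1$, and therefore has rank $\le r-1$. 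Consequently the maximum term-rank of $f$ equals $r=\rank{f}$. This is exactly the step that collapses for general rules, where a datalog head atom can yield a fact all of whose terms are frontier images of strictly smaller rank.

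Granting the lemma, the main induction is routine. For $\t\in\adom{\I}$ both quantities are $0$. For a fresh variable $\t=z_{(\erule,\match)}$ of rank $r$, I would start from $\rank{\t}=1+\max\{\rank{f'}\mid f'\in\match(\mathsf{body}(\erule))\}$ and use the lemma to rewrite each $\rank{f'}$ as the maximum rank of the terms of $f'$, yielding $\rank{\t}=1+\max\{\rank{u}\mid u\in\adom{\match(\mathsf{body}(\erule))}\}$. All these body terms have rank strictly below $r$, so the induction hypothesis turns each $\rank{u}$ into $\valdepth{u}$, and the resulting expression $1+\max\{\valdepth{u}\mid u\in\adom{\match(\mathsf{body}(\erule))}\}$ is by definition $\valdepth{\t}$.

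The main obstacle is thus concentrated in the lemma, and within it in two bookkeeping points that rely on the breadth-first oblivious semantics: that a fresh variable generated by a trigger at rank $r$ has term-rank exactly $r$ (it cannot appear at any earlier saturation level), and that the frontier images all have rank at most $r-1$ because they already occur in the body facts. Once fact-rank has been legitimately translated into term-rank under the FE hypothesis, the two recursive definitions coincide term by term and the equality follows.
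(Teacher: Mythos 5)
Your proof is correct and follows essentially the same route as the paper's: an induction on rank whose only non-trivial ingredient is that, for FE-rules, every derived fact carries a fresh existential variable created at that very rank, so that the recursions defining rank and existential depth align term by term. The only difference is cosmetic packaging --- you isolate this as a lemma stating $\rank{f}=\max\{\rank{\t}\mid \t\in\adom{f}\}$ and then induct on term rank, whereas the paper carries the statement $\valdepth{f}=\rank{f}$ for facts through the induction and reads off the term case at the end.
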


It should be clear that, for a given ruleset, the $\ochase$ may have unbounded rank even when it terminates on all instances (see for instance Example  \ref{ex-bounded-termination}).
Nevertheless, when a ruleset is in $\cto$, our goal is to show that there exists a bound on the \emph{existential depth} of its terms, which holds for all instances.
Aiming at this, we present a lemma stating that existential depth of terms are preserved by embeddings.

 \begin{lemma}\label{o-depth-preservation}
 \label{lemma-depth}
 For any  embedding  $\map$ from $\I$ to $\I'$ and any $i \geq 0$, there exists an embedding $\map'\supseteq \map$ from $\ochasekshort{i}$
   to $\ochasek{\I'}{\ruleset}{i}$ which preserves the existential depth of terms, i.e., for every term $\t$ in $\ochaseshort$ it holds that
$\valdepth{v}=\valdepth{\map'(v)}$. 
 \end{lemma}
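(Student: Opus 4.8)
The plan is to prove this by induction on the saturation rank $i$, constructing the embedding $\map'$ level by level and simultaneously verifying that existential depths are preserved. The base case $i=0$ is immediate: $\ochasekshort{0} = \I$ and $\ochasek{\I'}{\ruleset}{0} = \I'$, so $\map' = \map$ works, and every term of $\I$ has existential depth $0$, as does its image under an embedding into $\I'$.

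For the inductive step, suppose I have an embedding $\map'_i \supseteq \map$ from $\ochasekshort{i}$ to $\ochasek{\I'}{\ruleset}{i}$ preserving existential depth. I want to extend it to $\map'_{i+1}$ on the new terms appearing at rank $i+1$. The key observation is that every trigger $(\erule, \match)$ on $\ochasekshort{i}$ gives rise to a corresponding trigger $(\erule, \map'_i \circ \match)$ on $\ochasek{\I'}{\ruleset}{i}$, since $\map'_i$ sends $\match(\mathsf{body}(\erule))$ into $\ochasek{\I'}{\ruleset}{i}$ by homomorphism. I would then define $\map'_{i+1}$ to map the fresh variable $z_{(\erule,\match)}$ produced in the source to the fresh variable $z_{(\erule, \map'_i \circ \match)}$ produced in the target. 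Because the oblivious chase names fresh variables by the \emph{entire} trigger, distinct source triggers yield distinct fresh variables on both sides, so this assignment is well-defined and extends $\map'_i$ consistently; one checks that $\map'_{i+1}$ indeed maps $\safeo(\mathsf{head}(\erule))$ into the corresponding head image in the target, establishing the embedding property.

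The depth-preservation part then follows by unfolding the recursive definition of $\valdepth{\cdot}$. A fresh term $\t = z_{(\erule,\match)}$ has $\valdepth{v} = 1 + \max\{\valdepth{v_B}\}$ over body terms $v_B \in \match(\mathsf{body}(\erule))$, and its image $\map'_{i+1}(v) = z_{(\erule, \map'_i \circ \match)}$ has depth $1 + \max\{\valdepth{v'_B}\}$ over body terms $v'_B \in (\map'_i \circ \match)(\mathsf{body}(\erule)) = \map'_i(\match(\mathsf{body}(\erule)))$. Since $\map'_i$ bijects the relevant body terms onto their images and preserves their depths by the induction hypothesis, the two maxima coincide, giving $\valdepth{v} = \valdepth{\map'_{i+1}(v)}$. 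One subtlety worth flagging is that the definition of existential depth quantifies over \emph{any} trigger generating $\t$: since the naming scheme ties $\t$ to a unique trigger, this ambiguity is harmless in the oblivious case, but I would state the correspondence between generating triggers in source and target explicitly to keep the depth computation unambiguous.

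The main obstacle I anticipate is bookkeeping rather than conceptual difficulty: I must ensure the extensions $\map'_i$ are mutually consistent so their union $\map' = \bigcup_i \map'_i$ is a well-defined embedding on all of $\ochaseshort$, and that a term appearing first at rank $i$ is never reassigned at a later rank. This is guaranteed by the breadth-first indexing, under which $\rank{v}$ is the unique smallest $k$ with $v \in \ochasekshort{k} \setminus \ochasekshort{k-1}$, so each term is introduced exactly once and its image is fixed thereafter. A second point requiring care is that $\map$ is only an \emph{embedding}, not necessarily a homomorphism, so it need not fix constants; I must check that the argument nowhere relies on constants being preserved, which it does not, since the fresh-variable correspondence and the depth recursion are insensitive to whether shared terms are constants or variables.
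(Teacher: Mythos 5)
Your proposal is correct and follows essentially the same route as the paper's proof: induction on the rank, lifting each source trigger $(\erule,\match)$ to the target trigger $(\erule,\map'\circ\match)$, extending the embedding by sending $z_{(\erule,\match)}$ to $z_{(\erule,\map'\circ\match)}$, and concluding depth preservation by unfolding the recursive definition over the body terms. The additional points you flag (well-definedness via the oblivious naming scheme, consistency of the union across ranks) are sound and only make explicit what the paper leaves implicit.
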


It is well-known that the $\ochase$ terminates on all instances if and only if it terminates on the critical instance \cite{DBLP:conf/pods/Marnette09}. We leverage this property to compute a bound on the existential depth under chase termination.

\begin{theorem}
\label{thm-finitedepth-O}
When $\ruleset\in\cto$ there exists a constant $\kd$ such that for every instance $I$, the existential depth of a term in $\ochaseshort$ is bounded by $\kd$.
\end{theorem}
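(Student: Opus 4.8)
The plan is to take $\kd$ to be the (finite) existential depth of the oblivious chase on the single \emph{critical instance} $\Ia$, and then to transport this bound to an arbitrary instance $\I$ using the depth-preserving embedding provided by Lemma~\ref{lemma-depth}. The key structural fact being exploited is that $\Ia$ is a \emph{universal} instance: since $\Ia$ contains all atoms over $\Pred$ built on the special constant $a$, every instance $\I$ over $\Pred$ admits an embedding into it (simply map every term of $\I$ to $a$). Combined with the fact that embeddings propagate up the chase while preserving existential depth, this forces a single instance-independent bound.

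First I would fix the constant. Since $\ruleset\in\cto$, the oblivious chase terminates on every instance, and in particular on $\Ia$, which is finite because $\Pred$ is finite and $\Ia$ uses only the constant $a$. By the discussion preceding the theorem, termination of $\ochaseIS{\Ia}{\ruleset}$ means its existential depth is finite; I set $\kd$ to be this value. Observe that $\Ia$ is determined solely by $\Pred$, so $\kd$ depends only on $\ruleset$ and on nothing about the input instance. Next I would take an arbitrary instance $\I$ and an arbitrary term $\t$ of $\ochaseshort$. As $\t$ is generated at a finite rank, there is some $i\geq 0$ with $\t\in\ochasekshort{i}$. Picking an embedding $\map$ from $\I$ to $\Ia$, I invoke Lemma~\ref{lemma-depth} with $\I'=\Ia$ to obtain, for this $i$, an embedding $\map'\supseteq\map$ from $\ochasekshort{i}$ to $\ochasek{\Ia}{\ruleset}{i}$ that preserves existential depth. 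Hence $\valdepth{\t}=\valdepth{\map'(\t)}$, and since $\map'(\t)$ is a term of $\ochasek{\Ia}{\ruleset}{i}\subseteq\ochaseIS{\Ia}{\ruleset}$, its depth is at most $\kd$. Therefore $\valdepth{\t}\leq\kd$, which is exactly the claim.

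I expect no genuine obstacle remaining at this level: the real technical content, namely that an embedding of instances lifts to an embedding of their chases that respects existential depth rank by rank, is precisely what Lemma~\ref{lemma-depth} already delivers. The only points demanding a little care are (i) verifying that $\Ia$ really is a legitimate common target, so that \emph{one} bound suffices for \emph{all} instances, and (ii) that the depth preservation in Lemma~\ref{lemma-depth} is stated at each finite rank $i$, which is what allows a term appearing at rank $i$ in $\ochaseshort$ to be matched with a same-depth term lying in the finite chase of $\Ia$. With these two ingredients aligned, the theorem follows as a direct corollary of the lemma together with the universality of the critical instance.
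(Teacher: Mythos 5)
Your proposal is correct and follows essentially the same route as the paper: embed an arbitrary instance into the critical instance $\Ia$, lift the embedding rank by rank via Lemma~\ref{lemma-depth}, and use the finiteness of $\ochaseIS{\Ia}{\ruleset}$ (from $\ruleset\in\cto$) to extract an instance-independent bound. The only cosmetic difference is that the paper sets $\kd$ to the last rank at which new terms appear in $\ochaseIS{\Ia}{\ruleset}$ (an upper bound on depth since $\valdepth{v}\leq\rank{v}$), whereas you take the existential depth of $\ochaseIS{\Ia}{\ruleset}$ directly; both choices are finite and yield the same conclusion.
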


\begin{proof}
Because $\ruleset\in\cto$, the $\ochase$ terminates on the critical instance $\Ia$.
Let $\kd$ be the largest rank 
such that 
$\adom{\ochasek{\Ia}{\ruleset}{\kd}}\setminus\adom{\ochasek{\Ia}{\ruleset}{\kd-1}}\neq\emptyset$. 
Every instance $\I$ can be embedded into $\Ia$.
By Lemma \ref{lemma-depth} the existential depth of the terms in $\ochasek{\I}{\ruleset}{}$ is bounded by that of $\ochaseIS{\Ia}{\ruleset}$, which is in turn bounded by $\kd$.
\end{proof}

  Chase termination is a necessary condition for boundedness as it bounds the existential depths of the variables generated by the chase - but not the rank (see the datalog case).
Interestingly, for FE-rules, 
chase termination also becomes a sufficient condition for boundedness, because the notion of rank and existential depth coincide (Proposition \ref{prop-o-depth-rank}).

\begin{corollary}
\label{cor-CT-BN-fully}
For $\ruleset$ a set of FE-rules, $\ruleset\in\cto$ iff $\ruleset\in\bno$.
\end{corollary}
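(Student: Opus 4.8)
The plan is to prove the two directions of the equivalence $\ruleset\in\cto$ iff $\ruleset\in\bno$ for sets of FE-rules. The inclusion $\bno\subseteq\cto$ holds trivially and in full generality (it is already noted in the text as $\bnany\subset\ctany$), so the only direction requiring work is $\cto\Rightarrow\bno$. The whole argument is designed to be a short consequence of the results established earlier in this section, so I would not reprove anything from scratch; instead I would assemble Theorem~\ref{thm-finitedepth-O} and Proposition~\ref{prop-o-depth-rank}.

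First I would invoke Theorem~\ref{thm-finitedepth-O}: since $\ruleset\in\cto$, there is a constant $\kd$, independent of the instance, such that for every instance $\I$ the existential depth of every term in $\ochaseshort$ is bounded by $\kd$. Next I would use the hypothesis that $\ruleset$ consists only of FE-rules to apply Proposition~\ref{prop-o-depth-rank}, which tells us that for FE-rules the notions of rank and existential depth coincide: for every instance $\I$ and every term $\t$ in $\ochaseshort$ we have $\rank{\t}=\valdepth{\t}$. Chaining these two facts gives $\rank{\t}=\valdepth{\t}\leq\kd$ for every term, uniformly over all instances $\I$.

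The final step is to turn a uniform bound on the rank of \emph{terms} into a uniform bound on the rank of the whole chase, i.e.\ a $k$ with $\ochasekshort{k}=\ochaseshort$ for all $\I$. Here I would argue that in the breadth-first $\ochase$ every fact $f$ produced at rank $r\geq 1$ by a trigger $(\erule,\match)$ introduces, via $\safeo$, at least one fresh existential variable (because $\erule$ is an FE-rule, each head atom carries an existential variable), and that fresh variable has rank exactly $r$. Hence if no term ever attains rank greater than $\kd$, then no fact of rank greater than $\kd$ is ever produced, so $\ochaseshort=\ochasekshort{\kd}$ for every instance $\I$. This exhibits the single instance-independent bound $k=\kd$ witnessing $\ruleset\in\bno$, completing the direction $\cto\Rightarrow\bno$ and hence the corollary.

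The only delicate point — and the step I expect to need the most care — is the last one: I must be sure that a bound on the depth/rank of individual terms really forces saturation of the chase at that rank, which relies specifically on the FE-property guaranteeing that any genuinely new fact carries a fresh term of the corresponding rank. For non-FE (datalog) rules this implication fails, as Example~\ref{ex-bounded-termination} shows, which is precisely why the FE-hypothesis is essential; so I would state explicitly that it is the coincidence of rank and existential depth (Proposition~\ref{prop-o-depth-rank}), valid only for FE-rules, that upgrades the depth bound of Theorem~\ref{thm-finitedepth-O} into the rank bound defining boundedness.
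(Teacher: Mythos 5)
Your proposal is correct and follows exactly the route the paper intends: the corollary is justified there by combining Theorem~\ref{thm-finitedepth-O} (uniform bound $\kd$ on existential depth under $\cto$) with Proposition~\ref{prop-o-depth-rank} (rank equals existential depth for FE-rules), which is precisely your chain of reasoning. The ``delicate point'' you flag -- lifting the bound from terms to facts via the fresh variable each new fact must carry -- is the same observation the paper makes when it notes that Proposition~\ref{prop-o-depth-rank} could equally be stated for facts, and its appendix proof of that proposition establishes exactly that fact-level version.
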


For general existential rules, we will  later show that when  a restricted form of FO-rewritability holds, one can also provide a bound to the rank of the $\ochase$ (Theorem \ref{thm-bn-foaf-cto}).

\subsection{The Semi-Oblivious Case}
When applied to the $\sochase$, the previous notion of existential depth is not preserved by embedding, which hinders the possibility of using the critical instance to bound the existential depth of terms.
As illustrated below, this is due to the fact that the $\sochase$ makes equal the result of two distinct triggers agreeing on a rule frontier. 
\begin{example}
Consider $\I = \{p(a,b)\}$, $\I' = \I\cup\{r(a,b)\}$ and $\Sigma = \{ \erule_1: p(x,y) \rightarrow \exists z ~r(z,y) \;\; \erule_2: r(x,y) \rightarrow \exists z~s(y,z)\}$.
Then, 
$\sochasekshort{2}=
\I
\cup\{\, r(z_{(\erule_1,\match)},b)
\cup s(b,z_{(\erule_2,\match)})\,\}
$
with $\match=\{y{\ \mapsto\ } b\}$.
Also, 
$\sochasekshort{2}\subseteq\sochasek{\I'}{\ruleset}{1}$
because all triggers applied by the chase from $\I$ are already applicable on $\I'$.
The application of $\erule_2$ on  
$r(a,b)$ and $r(z_{(\erule_1,\match)},b)$ gives equal results, hence $\sochasek{\I'}{\ruleset}{1}=\sochasek{\I'}{\ruleset}{2}$.
In the embeddings from $\sochasek{\I}{\ruleset}{2}$ to $\sochasek{\I'}{\ruleset}{2}$, $z_{(\erule_2,\match)}$ is mapped to itself,
but both occurrences have different existential depth (resp. 2 and 1).  
\end{example} 

It is therefore natural to turn to the following notion of depth, which accounts for frontier terms only. 

\begin{definition}
The \emph{frontier existential depth} (or simply frontier depth) of a term $\t$ that belongs to $\sochaseshort$ is
$$\frdepth{v}=
\left\{
\begin{array}{lr}
0&\mbox{if } v\in \adom{\I}
\\[0.9mm]
1&\mbox{if } \fr\erule=\emptyset
\\
1+\max\{\;\frdepth{v_B}\;\}
&\mbox{otherwise} 
\end{array}
\right.
$$
where $v_B$ is any term in ${\match(\fr\erule)}$ used by 
 a trigger $(\erule,\match)$ which generates $\t$.
Accordingly, the \emph{frontier depth} of a fact $f$ is the maximum frontier depth of its terms.
The frontier depth of $\sochaseshort$ is defined as the maximum frontier depth of its facts if it  is finite  and is infinite otherwise.

\end{definition}

Note that frontier depth coincides with the (usual) depth of terms generated by the Skolem chase. 

Clearly, $\frdepth\t\leq\valdepth\t$ for all $\t$ in $\ochaseshort$.  The following example illustrates the difference between the two notions of (existential) depth.

\begin{example}\label{ex-frdepth} Let $\ruleset = \{\erule = p(x,y,u)\rightarrow \exists z~p(y,x,z)\}$. Starting from $\I = \{p(a,b,c)\}$, the o-chase generates an infinite number of fresh variables $v$ with increasing $\valdepth{v}$. The rank of the so-chase is instead 2 and for each fresh variable $v$, $\frdepth{v} = 1$ as all triggers map $\fr{\erule}$ to $\adom{\I}$. 
\end{example}

It is worth noting that not only the oblivious notion of exitential depth is not effective for studying the $\sochase$, but also that the frontier depth is not well characterizing the behavior of the $\ochase$ either. The crux is that
the finiteness of the frontier depth cannot be related with the termination of the $\ochase$, as illustrated by Example \ref{ex-frdepth}. 
Using such a notion to study the $\ochase$ would impede us, for instance, to establish Corollary \ref{cor-CT-BN-fully}, which relies on the fact that rank and existential depth coincide for the oblivious-chase (Property \ref{prop-o-depth-rank}).



\smallskip
We are now ready to show that the frontier depth is preserved by embeddings.
The next lemma and theorem are the counter-parts of Lemma \ref{o-depth-preservation} and Theorem \ref{thm-finitedepth-O} for the $\sochase$. 

 \begin{lemma}
 \label{so-depth-preservation}
 For any embedding   $\map$ from $\I$ to $\I'$ and any $i \geq 0$,  there exists an embedding $\map'\supseteq \map$ from $\sochasekshort{i}$
   to $\sochasek{\I'}{\ruleset}{i}$ which preserves the frontier depth of terms.
\end{lemma}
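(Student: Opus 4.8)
The plan is to proceed by induction on $i$, building a chain $\map'_0\subseteq\map'_1\subseteq\cdots$ where $\map'_i$ witnesses the claim at rank $i$; this parallels the proof of Lemma~\ref{o-depth-preservation}, the only real difference being the treatment of fresh variables. For $i=0$ I take $\map'_0=\map$: every term of $\I$ has frontier depth $0$ and is sent by $\map$ into $\adom{\I'}$, whose terms also have frontier depth $0$. For the inductive step, suppose $\map'_i$ embeds $\sochasekshort{i}$ into $\sochasek{\I'}{\ruleset}{i}$ and preserves frontier depth. The first observation is that for any trigger $(\erule,\match)$ of $\sochasekshort{i}$ the composition $\match'=\map'_i\circ\match$ is again a trigger of $\sochasek{\I'}{\ruleset}{i}$: rule bodies contain only variables, so $\match'$ is vacuously a homomorphism on constants and maps $\mathsf{body}(\erule)$ into $\sochasek{\I'}{\ruleset}{i}$ by the inductive hypothesis. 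I then extend $\map'_i$ to $\map'_{i+1}$ by setting, for each such trigger and each $z\in\ex{\erule}$,
\[
\map'_{i+1}\bigl(z_{(\erule,\match_{|\fr{\erule}})}\bigr)=z_{(\erule,\match'_{|\fr{\erule}})}.
\]

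The step I expect to be the crux, and the reason frontier depth (rather than the oblivious existential depth) is the right notion here, is showing this extension is well defined. Because the $\sochase$ names a fresh variable solely through the frontier restriction $\match_{|\fr{\erule}}$, any two triggers $(\erule,\match_1)$ and $(\erule,\match_2)$ that produce the same variable satisfy $\match_1(x)=\match_2(x)$ for all $x\in\fr{\erule}$; applying $\map'_i$ gives $\map'_i(\match_1(x))=\map'_i(\match_2(x))$, so $\match'_1=\map'_i\circ\match_1$ and $\match'_2=\map'_i\circ\match_2$ agree on $\fr{\erule}$ and hence receive the same $\I'$-side image. Crucially, this does not require $\map'_i$ to be injective, which is exactly what fails in the oblivious setting (the example preceding this lemma): even when $\map'_i$ collapses distinct frontier images, the naming reads only the frontier, so $\map'_{i+1}$ stays consistent.

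It then remains to verify the two conclusions. That $\map'_{i+1}$ embeds into $\sochasek{\I'}{\ruleset}{i+1}$ holds because composing $\map'_{i+1}$ with the $\I$-side substitution $\safeso$ agrees, on every head variable, with the substitution induced by $(\erule,\match')$ --- frontier variables are routed through $\match$ and then $\map'_i$ into $\match'$, while each existential variable is sent to the matching fresh variable by construction --- so $\map'_{i+1}(\safeso(\mathsf{head}(\erule)))$ is precisely the output of $(\erule,\match')$ and lies in $\sochasek{\I'}{\ruleset}{i+1}$. For frontier-depth preservation I argue by cases on the definition of $\frdepth{\cdot}$: terms already present in $\sochasekshort{i}$ are covered by the inductive hypothesis; a fresh variable with $\fr{\erule}=\emptyset$ has frontier depth $1$ on both sides; and when $\fr{\erule}\neq\emptyset$ the inductive hypothesis yields $\frdepth{\match(x)}=\frdepth{\match'(x)}$ for every frontier variable $x$, so the two maxima coincide and $\frdepth{z_{(\erule,\match_{|\fr{\erule}})}}=1+\max_x\frdepth{\match(x)}$ equals $\frdepth{z_{(\erule,\match'_{|\fr{\erule}})}}$. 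The same computation shows that if $\map'_{i+1}$ happens to identify two distinct fresh variables, they must share the same frontier depth, so no inconsistency can arise.
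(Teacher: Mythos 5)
Your proof is correct and follows essentially the same route as the paper's: induction on the rank, composing each trigger with the current embedding, naming the image of a fresh variable by the frontier restriction of the composed trigger, and computing frontier depths via the inductive hypothesis. The only difference is cosmetic --- you fold the paper's case split on whether the trigger $(\erule,\map'\circ\match)$ was already fired in $\I'$'s chase into a single well-definedness argument, which works because the breadth-first $\sochase$ at rank $i+1$ adds the output of every trigger of rank $i$ anyway.
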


\begin{theorem}
\label{thm-finitedepth-SO}
When $\ruleset\in\ctso$ there exists a constant $\kd$ such that for every instance $I$, the frontier depth of a term in $\sochaseshort$ is bounded by $\kd$.
\end{theorem}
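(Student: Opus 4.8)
The plan is to mirror the proof of Theorem~\ref{thm-finitedepth-O}, replacing the existential depth with the frontier depth and Lemma~\ref{lemma-depth} with its semi-oblivious counterpart Lemma~\ref{so-depth-preservation}. The whole argument rests on transferring, via embeddings, a bound that holds on the critical instance $\Ia$ to every instance $\I$.

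First I would note that, since $\ruleset\in\ctso$ asks that the $\sochase$ terminate on \emph{all} instances, it terminates in particular on $\Ia$; only this trivial direction is needed, not the full equivalence between termination on all instances and on the critical one. Because $\Ia$ is finite and each saturation step of the $\sochase$ fires only finitely many triggers---there being finitely many homomorphisms from the rule bodies into a finite instance, each producing finitely many atoms---every $\sochasek{\Ia}{\ruleset}{i}$ is finite, so termination makes $\sochaseIS{\Ia}{\ruleset}$ itself finite. Each of its terms then has finite frontier depth, and I would take $\kd$ to be their maximum. (Equivalently, and even closer to the oblivious proof, one may set $\kd$ to the largest rank at which a new term appears in $\sochaseIS{\Ia}{\ruleset}$, since a straightforward induction gives $\frdepth{\t}\leq\rank{\t}$ for every $\sochase$ term $\t$.)

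Next, for an arbitrary instance $\I$, I would use that $\I$ embeds into $\Ia$ (mapping every term to the constant $a$). Feeding this embedding to Lemma~\ref{so-depth-preservation} yields, for each $i\geq0$, an embedding from $\sochasekshort{i}$ into $\sochasek{\Ia}{\ruleset}{i}$ that preserves frontier depth. Consequently every term $\t$ of $\sochaseshort$ shares its frontier depth with its image in $\sochaseIS{\Ia}{\ruleset}$, and that image has frontier depth at most $\kd$; hence $\frdepth{\t}\leq\kd$ with $\kd$ independent of $\I$, which is exactly the statement.

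The genuine difficulty of this development is not in the theorem but in the choice of depth measure, and it is already isolated in Lemma~\ref{so-depth-preservation}. As the example just before the definition of frontier depth shows, the plain existential depth $\valdepth{\t}$ is \emph{not} embedding-invariant for the $\sochase$, so the oblivious argument cannot be replayed verbatim; this is precisely why the frontier depth is introduced. Once the frontier-depth preservation lemma is granted, the remaining obligations---the finiteness argument on $\Ia$ and the existence of an embedding of $\I$ into $\Ia$---are routine, so I expect the real obstacle of the whole section to lie in establishing Lemma~\ref{so-depth-preservation}, not in the present theorem.
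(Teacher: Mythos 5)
Your proof is correct and follows essentially the same route as the paper: terminate on the critical instance $\Ia$, take $\kd$ from the finite chase $\sochaseIS{\Ia}{\ruleset}$ (the paper uses its terminating rank, you use the maximum frontier depth, which you correctly note is bounded by that rank), and transfer the bound to arbitrary $\I$ via the embedding into $\Ia$ together with Lemma~\ref{so-depth-preservation}. Your closing observation that the real work lies in the frontier-depth preservation lemma rather than in this theorem matches the structure of the paper.
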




\subsection{On the Interest of the Breadth-First Chase}\label{subsection-bf}

We conclude this section with some remarks on the interest of studying boundedness for breadth-first chases. We assume that the reader is familiar with the notion of chase sequence.\footnote{A chase sequence is any sequence of triggers satisfying the applicability criterion of the chase variant. For the oblivious chase, the same trigger should not be applied twice. For the semi-oblivious chase a trigger is not applied if a trigger for the same rule assigning the same image for the frontier variables has been applied before.}
We define the rank of a chase sequence on $(\I, \ruleset)$ as the maximal rank of its facts if it is finite, and infinite otherwise. 

 For the (semi-)oblivious chase, it is well-known that there is a terminating chase sequence for $(\I, \ruleset)$ if and only if all chase sequences for $(\I, \ruleset)$ terminate. However, not all terminating chase sequences have the same rank, and the minimal rank is obtained with breadth-first sequences \cite{RR18}.
This makes the notion of boundedness we consider equivalent to studying whether there exists a bound such that, for all instance, \emph{there exists} a terminating chasing sequence whose rank is within the bound. Hence, 
it characterizes the fact that the chase can indeed terminate within that bound, if a strategy ensuring a minimal sequence rank is followed. 
It is therefore natural to consider breadth-first chases which achieve this property, like the (semi-)oblivious chase. 
Example \ref{ex-bounded-termination} illustrates this concept and shows that, already for datalog, the rank of some chase sequences may be not bounded, while the rank of all breadth-first  sequences is bounded. This happens for instance if all applications of the transitivity rule $\erule_2$ are performed before the rule  $\erule_1$.

In the special case of FE-rules, it is not hard to see that all oblivious chase sequences for $(\I, \ruleset)$ have the same rank. However, this does not hold for the semi-oblivious chase. Below, a variation of 
Example \ref{ex-bounded-termination}, where some dummy variables are introduced, illustrates this point.

\begin{example}\label{ex-bounded-termination-2} Let $\ruleset = \{\erule_1, \erule_2\}$, with $\erule_1 = p(x,y,t) \land p(y,z,u) \rightarrow \exists v~ p(x,z,v)$ and $\erule_2 = p(x,y,t) \land p(w,z,u) \rightarrow \exists v~ p(x,z,v)$. 
The rank of $\sochaseshort$ is bounded by 2 for any $\I$, while again  performing all applications of $\erule_2$  before $\erule_1$ gives derivations of different ranks. 
\end{example}

\section{The Impact of First Order Rewritability}
We now turn our attention to FO-rewritability and show that it yields a bound on the rank of specific (sets of) facts that share terms with the initial instance $\I$. For the $\ochase$, we bound the rank of facts that have all their terms in $\I$. 
For the $\sochase$, we consider triggers that map a rule frontier to terms of $\I$: we do not bound the rank of facts that allow to fire such triggers, but we show that for each such trigger $t = (\erule,\match)$, there is a trigger $t' = (\erule,\match')$ that agrees with $t$ on the mapping of $\fr{\erule}$ and that is fired at a bounded rank. 
In Section \ref{sec-bound}, we will leverage these results to show that FO-rewritability yields a bound on the rank of all facts with a certain existential depth. For the $\ochase$, a restricted version of FO-rewritability is sufficient to get these properties.


We say that a pair $(Q, \ruleset)$ is \emph{FO-rewritable} (resp. UCQ-rewritable) if there is an FO-query  (resp. a UCQ) $\mathcal \query$  such that, for all $\I$, the certain answers to $\query$ on $(\I, \ruleset)$ are exactly the answers to  $\mathcal \query$ on $\I$. It is known that FO-rewritability is equivalent to UCQ-rewritability.\footnote{It follows from the (Finite) Homomorphism preservation theorem, a classical result in model theory \cite{DBLP:journals/jacm/Rossman08}.} A set of rules $\ruleset$ is \emph{FO-rewritable} (or equivalently, UCQ-rewritable) if $(Q, \ruleset)$ is FO-rewritable for every CQ $Q$. We denote by $\forw$ the class of FO-rewritable rulesets.  We will also consider specific classes of CQs. Given a class of CQs $\mathcal C$,
we say that a ruleset $\ruleset$ is FO-rewritable with respect to $\mathcal C$ if  $(Q, \ruleset)$ is FO-rewritable for all $Q \in \mathcal C$.
We denote by $\forwany$ the corresponding class. We first point out that FO-rewritability with respect to full-atomic queries, denoted by $\forwaf$, is a strictly weaker property than FO-rewritability.

\begin{proposition}
\label{prop:foafsupset}
$\forwaf \supset\forw$
\end{proposition}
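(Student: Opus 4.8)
The inclusion $\forw \subseteq \forwaf$ is immediate: full-atomic queries form a subclass of all conjunctive queries, so a ruleset $\ruleset$ for which $(\query,\ruleset)$ is FO-rewritable for \emph{every} CQ $\query$ is in particular FO-rewritable for every full-atomic query. The whole content is therefore the strictness, and my plan is to exhibit a witness $\ruleset \in \forwaf \setminus \forw$ together with two independent arguments: one showing that \emph{all} full-atomic queries are FO-rewritable w.r.t. $\ruleset$, and one showing that \emph{some} CQ is not.

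For the first argument I would rely on a clean sufficient condition for membership in $\forwaf$. Consider any set of FE-rules: by definition every head atom carries an existential variable, so every fact produced by the chase contains at least one fresh (non-constant) term, and hence no all-constant fact outside $\I$ is ever derived. Consequently, for a full-atomic query $p(x_1,\dots,x_k)$ the certain answers on $(\I,\ruleset)$ are exactly the tuples of constants $(\const_1,\dots,\const_k)$ with $p(\const_1,\dots,\const_k)\in\I$, i.e. the answers of the query on $\I$ alone. Thus $p(x_1,\dots,x_k)$ is literally its own FO-rewriting, and every set of FE-rules lies in $\forwaf$ (independently of the chase variant). This reduces the task to picking a single FE-ruleset that fails FO-rewritability for some genuinely non-full-atomic CQ $\query$ (one using an existential join variable or a constant).

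For the second argument I would take $\ruleset$ whose chase builds unbounded structure from any seed, and a CQ $\query$ whose \emph{answer variables and/or constants pin the endpoints} of a pattern, so that the backward rewriting of $(\query,\ruleset)$ produces infinitely many pairwise non-equivalent CQs that no finite UCQ can subsume. Non-FO-rewritability would then be established either directly, by exhibiting this infinite non-collapsing rewriting set (equivalently, showing it is infinite up to homomorphic equivalence), or semantically, by a Gaifman-locality / Ehrenfeucht–Fra\"iss\'e argument proving that the Boolean map $\I \mapsto [\,\I,\ruleset \models \query[\,\cdot\,]\,]$ is non-local and hence not FO-definable over instances.

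The main obstacle is the tension between the two requirements, and designing the witness so as to resolve it. On the one hand, the non-full-atomic query must have a non-collapsing rewriting: purely Boolean or ``loose'' patterns are subsumed by their own short subpatterns (a purely existential transitivity surrogate $\body \rightarrow \exists z\, p(x,z)$ illustrates this collapse), so the endpoints must be pinned. On the other hand, the most natural way to pin endpoints—recording reachability in a single binary atom—resurfaces that relation's saturated constant-extension as a full-atomic query, which would be non-FO and eject $\ruleset$ from $\forwaf$. The witness must therefore encode the recursive/unbounded behaviour only through \emph{multi-atom patterns matched into the fresh part of the chase}, never through the constant-extension of any one predicate; getting this encoding right, and verifying that the resulting infinite rewriting truly does not collapse, is the delicate step of the proof.
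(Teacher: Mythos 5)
Your argument for the easy inclusion and your sufficient condition for membership in $\forwaf$ are both correct, and the latter matches a remark the paper itself makes: since a full-atomic query has only answer variables and an answer variable cannot be unified with an existential variable, no FE-rule can contribute to its rewriting, so every set of FE-rules is trivially in $\forwaf$ (your chase-side version of this, via the observation that FE-rules never derive an all-constant fact, is equally valid). You have also correctly diagnosed the design constraints on the witness: the separating query must have its endpoints pinned so that the backward rewriting does not collapse, while the recursion must live entirely in the fresh part of the chase so that no full-atomic query sees it.

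The genuine gap is that you stop exactly there: you never exhibit the witness, and you explicitly defer ``getting this encoding right, and verifying that the resulting infinite rewriting truly does not collapse'' as the delicate step. That step is the entire mathematical content of the strictness claim, so as written the proposal is a proof plan rather than a proof. The resolution of the tension you describe is a one-rule example: the paper takes $\ruleset=\{\erule\}$ with $\erule = p(x,x_1) \land p(x_1,x_2) \land p(x_2,z)\rightarrow \exists y\,(p(x,y)\land p(y,z))$. Both head atoms contain the existential variable $y$, so $\erule$ is an FE-rule and $\ruleset\in\forwaf$ by your own criterion; yet for the Boolean query $\query=\{p(a,u),p(u,b)\}$ with constants $a,b$, each rewriting step replaces a path of length $i$ from $a$ to $b$ by one of length $i+1$, yielding the infinite antichain $\{p(a,u_0),\dots,p(u_{i-1},u_i),p(u_i,b)\}$ of pairwise non-subsuming CQs, so $(\query,\ruleset)$ is not FO-rewritable and $\ruleset\notin\forw$. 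Note that this witness realizes your constraints precisely because the rule contracts $p$-paths only through a fresh midpoint, so the constant-extension of $p$ is never enlarged; to complete your proof you would need to supply this (or an equivalent) example and carry out the non-collapse verification you postponed.
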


\begin{proof}
The inclusion holds by definition, and to see that it is strict 
 consider $\ruleset=\{ \erule = p(x,x_1),p(x_1,x_2),p(x_2,z)\rightarrow \exists y ~p(x,y),p(y,z)\}$. $\ruleset$ is not FO-rewritable as for the Boolean query $\query=\{p(a,u), p(u,b)\}$, where $a$ and $b$ are constants, $(Q, \ruleset)$ is not FO-rewritable (we would need an infinite union of Boolean CQs of the form 
 $\{ p(a,u_0), ...p(u_{i-1},u_i), p(u_i, b)\}$, none of these queries being contained in another).
However, $\ruleset \in \forwaf$ as $(Q,\ruleset)$ is FO-rewritable for any $Q \in AF$. Indeed, $\erule$ cannot bring any answer to such query (in more technical terms, an existential variable of $\erule$ cannot be unified with an answer variable). 
\end{proof}
Note also that since full-atomic queries have only answer variables, they cannot be rewritten by means of \textit{FE-rules}. Thus, every set of FE-rules is trivially in $\forwaf$.
More interestingly, 
to check if $\ruleset\in\forwaf$
one can restrict the full-atomic queries of interest  
to those corresponding to the heads of the datalog rules yielded by the $\mathsf{DF}$-decomposition of $\ruleset$.  

\begin{proposition}\label{prop-FO-atomicfull-instance-datalogheads}
Let $\ruleset$ be a ruleset and $\mathsf{HD}_\ruleset$ be the full-atomic queries given by heads of the datalog rules in $\DF\ruleset$.
Then,
 $\ruleset\in \forwaf $ if and only if
  $\ruleset\in \forw^{\mathsf{HD}_\ruleset}$.
\end{proposition}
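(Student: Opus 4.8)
The plan is to treat the two directions of the biconditional asymmetrically, since one is immediate and the other carries all the content. The direction $\ruleset\in\forwaf\Rightarrow\ruleset\in\forw^{\mathsf{HD}_\ruleset}$ is trivial: $\mathsf{HD}_\ruleset$ is by construction a set of full-atomic queries, so FO-rewritability of $(Q,\ruleset)$ for \emph{all} full-atomic $Q$ entails it for the $Q\in\mathsf{HD}_\ruleset$ in particular. I would dispatch this in one sentence and spend the rest of the argument on the converse $\ruleset\in\forw^{\mathsf{HD}_\ruleset}\Rightarrow\ruleset\in\forwaf$. There I would fix an \emph{arbitrary} full-atomic query $Q=p(x_1,\dots,x_k)$ (unique up to renaming of answer variables, for each predicate $p$) and show $(Q,\ruleset)$ is FO-rewritable, splitting on whether $p$ occurs as the head predicate of a datalog rule of $\DF\ruleset$.

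The key observation I would isolate first concerns the shape of the certain answers to a full-atomic query. By definition the certain answers to $Q$ are the constant tuples $(\const_1,\dots,\const_k)\in\Const^k$ with $\I,\ruleset\models p(\const_1,\dots,\const_k)$; since this is a \emph{ground} atom, entailment is equivalent to its verbatim presence in a universal model, e.g. in $\ochaseshort$ (the only homomorphism from an all-constant atom being the identity on its constants). I would then note that any all-constant $p$-fact of $\ochaseshort$ is either already in $\I$ or is produced as a head atom of some rule application; and that a head atom originating from an atom carrying an existential variable always contains a fresh variable, hence cannot be all-constant. Consequently, an all-constant $p$-fact not in $\I$ can only be produced through a head atom with \emph{no} existential variable, i.e. precisely through one of the datalog rules of $\DF\ruleset$ whose head predicate is $p$.

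With this in hand the case analysis is short. If $p$ occurs as the head predicate of some datalog rule of $\DF\ruleset$, then the full-atomic query on $p$ belongs to $\mathsf{HD}_\ruleset$, so $(Q,\ruleset)$ is FO-rewritable directly by the hypothesis $\ruleset\in\forw^{\mathsf{HD}_\ruleset}$. Otherwise no rule of $\ruleset$ can ever produce an all-constant $p$-fact, so the certain answers to $Q$ are exactly the all-constant $p$-facts of $\I$; these are computed on $\I$ by $Q$ itself, which is thus a (trivial) FO-rewriting. In both cases $(Q,\ruleset)$ is FO-rewritable, and since $Q$ was an arbitrary full-atomic query we conclude $\ruleset\in\forwaf$.

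The main obstacle, and the step I would write out most carefully, is the key observation of the second paragraph: reducing FO-rewritability of a full-atomic query to the behaviour of all-constant facts in the chase, and in particular justifying that head atoms containing an existential variable always introduce a fresh null and so never contribute to an all-constant answer. Once that ``source of all-constant $p$-facts'' lemma is established, relating it to exactly the datalog heads of $\DF\ruleset$ (those head atoms with $\ex{\erule}$ not occurring in them) and the final case split are pure bookkeeping.
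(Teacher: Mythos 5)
Your trivial direction and your treatment of predicates that never occur as the head predicate of a datalog rule of $\DF\ruleset$ both match the paper, and your ``source of all-constant $p$-facts'' observation is correct. The gap is in the remaining case, at the step ``if $p$ occurs as the head predicate of some datalog rule of $\DF\ruleset$, then the full-atomic query on $p$ belongs to $\mathsf{HD}_\ruleset$''. There is no single full-atomic query on $p$ that is ``unique up to renaming of answer variables'': full-atomic queries on the same predicate differ by which answer positions carry the same variable (e.g.\ $p(x_1,x_2)$ versus $p(x,x)$), and $\mathsf{HD}_\ruleset$ contains only the identification patterns that literally occur in the datalog heads of $\DF\ruleset$. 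If the only datalog rule with head predicate $p$ is $B\rightarrow p(x,x)$, your hypothesis gives FO-rewritability of the query with both positions identified but says nothing directly about $p(x_1,x_2)$, whose certain answers $(a_1,a_2)$ with $a_1\neq a_2$ come only from $\I$ while those on the diagonal come from the rule; dually, if the head is $p(x_1,x_2)$ with distinct variables you must still handle the strictly more specific query $p(x,x)$. Knowing where the all-constant $p$-facts come from does not by itself convert FO-rewritability of one identification pattern into FO-rewritability of another.

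This missing step is exactly where the paper spends its effort: it attaches to each full-atomic query the partition of its answer positions, observes that any piece-unification of a query $Q$ with a datalog head $Q_h$ satisfies $u(Q)=u(Q_h)$, and proves (via a lemma on rewriting sequences) a property (P1) stating that FO-rewritability is preserved when the partition is coarsened, i.e.\ when further answer variables are identified. With (P1) both directions of mismatch are covered and the union of the rewritings of the relevant $u(Q_h)$, together with $Q$ itself, yields the rewriting of $Q$. Your argument becomes correct only once you add this unification step and a proof of (P1) (or a direct argument that the rewriting of a query with identified answer variables is obtained by substituting into the rewritings of the queries of $\mathsf{HD}_\ruleset$); as written, it establishes the claim only when every datalog head of $\DF\ruleset$ has pairwise distinct variables and the query under consideration is exactly that most general atom.
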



The following lemma upper bounds the rank of all facts with terms in $\I$ for sets of rules enjoying  FO-rewritability on full-atomic queries.

\begin{lemma}
\label{lem-FORWAF-rank}
 If $\ruleset\in\forwaf$ there is a constant $\kaf$ such that, for any instance $\I$ and fact $f$ such that $\adom f \subseteq \adom\I$, when $f \in \ochaseshort$ it holds that  $\rank{f} \leq \kaf$.
\end{lemma}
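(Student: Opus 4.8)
The plan is to use the finite UCQ-rewriting provided by $\ruleset\in\forwaf$ to replace each problematic fact by a derivation over a \emph{bounded} witness instance, and then to push the rank at which the fact is produced in that witness into $\ochaseshort$ via the embedding property of Lemma~\ref{lemma-depth}. The starting observation is that a fact $f=p(t_1,\dots,t_k)\in\ochaseshort$ whose terms all lie in $\adom\I$ is, morally, a certain answer to the full-atomic query $p(x_1,\dots,x_k)$ on $(\I,\ruleset)$. Since certain answers are defined only for tuples of constants whereas $\adom\I$ may contain existential variables, I would first pass to $\I_c=\freeze\I$. Freezing is an isomorphism of atom sets, so it preserves ranks exactly; writing $f_c$ for the frozen copy of $f$, whose arguments are now constants of $\I_c$, it suffices to bound $\rank{f_c}$ in $\ochase(\I_c,\ruleset)$.

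Next I would build the finite family of witnesses that yields the uniform constant. For every predicate $p\in\Pred$ of arity $k$, the full-atomic query $p(x_1,\dots,x_k)$ admits a finite UCQ-rewriting; let $\mathcal Q$ be the finite set of all CQs occurring across these rewritings. For each $q\in\mathcal Q$, say rewriting $p(x_1,\dots,x_k)$, freeze all of $\vars q$ to obtain the fixed instance $\freeze q$. The freezing map is a homomorphism witnessing the frozen answer tuple, so by \emph{soundness} of the rewriting applied to $\freeze q$ the frozen query atom $\hat f_q$ (namely $p$ applied to the frozen answer variables) is a certain answer, hence $\hat f_q\in\ochase(\freeze q,\ruleset)$ at some finite rank $r_q$. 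I then set $\kaf=\max_{q\in\mathcal Q}r_q$; this is well defined precisely because $\mathcal Q$ is finite and independent of $\I$.

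It remains to transport the bound. From $f_c\in\ochase(\I_c,\ruleset)$ with all-constant arguments, the corresponding tuple is a certain answer to $p(x_1,\dots,x_k)$ on $(\I_c,\ruleset)$, so by \emph{completeness} of the rewriting there is $q\in\mathcal Q$ and a homomorphism $h$ from $q$ into $\I_c$ mapping the answer variables onto the arguments of $f_c$. This $h$ induces an embedding $\map$ from $\freeze q$ to $\I_c$ (sending each frozen variable $\hat x$ to $h(x)$), and Lemma~\ref{lemma-depth} extends it to an embedding $\map'\supseteq\map$ from $\ochasek{\freeze q}{\ruleset}{r_q}$ to $\ochasek{\I_c}{\ruleset}{r_q}$. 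Applying $\map'$ to $\hat f_q$ produces exactly $f_c$, whence $\rank{f_c}\le r_q\le\kaf$; unfreezing then gives $\rank f\le\kaf$, as required.

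The main obstacle, and the step I would handle most carefully, is the bookkeeping around the nulls of $\I$ and the uniformity of the bound. The crucial gain is that $r_q$ is measured on the fixed finite instances $\freeze q$, so it is independent of $\I$; one must verify that the witnessing subinstance handed back by the rewriting is, up to the embedding $\map$, one of these finitely many frozen queries, and that Lemma~\ref{lemma-depth} really transports the production of $\hat f_q$ into the production of $f_c$ without increasing the rank (it only produces an embedding into $\ochasek{\I_c}{\ruleset}{r_q}$, which is exactly the rank bound we want). One also has to apply the two directions of rewriting correctness to the correct instances---soundness to $\freeze q$ and completeness to $\I_c$---and to note that for a single-atom query, entailment from the chase coincides with literal membership of the corresponding ground atom.
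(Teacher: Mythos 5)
Your proof is correct, but it reaches the bound by a genuinely different route than the paper. The paper fixes $\kaf$ to be the maximal number of \emph{breadth-first rewriting steps} needed to produce the UCQ-rewriting of a full-atomic query, and then invokes a specific property of the aggregated piece-unifier rewriting operator of \cite{DBLP:conf/rr/KonigLMT13}: the answers to $Q$ on $\ochasekshort{k}$ coincide with the answers on $\I$ of the $k$-step rewriting of $Q$. For a ground tuple and a single-atom query this immediately places the atom in $\ochasekshort{\kaf}$. You instead only use the \emph{existence} of a finite UCQ-rewriting (soundness and completeness, which is all that $\forwaf$ guarantees by definition), freeze each CQ $q$ of the rewriting into a canonical witness instance, read off the finite rank $r_q$ at which the head atom appears in $\ochase(\freeze{q},\ruleset)$, and transport that rank into $\ochase(\I_c,\ruleset)$ through the embedding extension of Lemma~\ref{lemma-depth}. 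Both arguments share the freezing step for instances with nulls and both rest on the finiteness of the set of full-atomic queries; what yours buys is independence from the particular rewriting procedure (you never need the step-for-step correspondence between rewriting depth and chase rank, only that \emph{some} finite UCQ-rewriting exists), at the cost of a possibly different numerical value of $\kaf$ and of re-using Lemma~\ref{lemma-depth} in the role the paper reserves for the critical-instance argument. The only points to state explicitly if you write this up are (i) that the $q\in\mathcal Q$ contain no constants, so $\freeze{q}$ really is one of finitely many fixed instances and the induced map $\map$ is an embedding in the paper's sense (substitutions need not fix constants), and (ii) that for a full-atomic query a ground certain answer is equivalent to literal membership of the corresponding atom in the chase --- both of which you flag. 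The argument is sound.
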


\begin{proof} The number of (non-isomorphic) full-atomic queries to be considered is finite,  as for Proposition \ref{prop-FO-atomicfull-instance-datalogheads}.
We take for $\kaf$ the maximal number of breadth-first rewriting steps necessary to obtain a UCQ-rewriting of a full-atomic query (we refer here to the breadth-first rewriting based on aggregated piece-unifiers, see \cite{DBLP:conf/rr/KonigLMT13}).
\end{proof}

The previous lemma also holds for the $\sochase$, however  we want to derive a bound on the rank of facts with a certain frontier depth, and for that full-atomic rewritability is not enough. To illustrate, consider $\ruleset = \{ \erule = p(x,y,u), p(y,z,v) \rightarrow \exists w~p(x,z,w) \}$. Here $\ruleset \in \forwaf$ (the only rewriting of a full-atomic query is the query itself because of the existential variable $w$). For any instance $\I$, the frontier depth of facts in the $\sochase$ is bounded by 1, however there is no bound on their rank (although the $\sochase$ terminates). Therefore, we give a different property for the $\sochase$, which requires the power of FO-rewritability.


\begin{lemma}
\label{lem-FORW-rank}
If $\ruleset\in\forw$ there is a constant $\kfo$ such that, for any instance $\I$ and any trigger
 $(\erule,\match)$ from $\sochaseshort$
  with $\match(\fr{\erule}) \subseteq \adom{\I}$,
there is also a trigger $(\erule,\match')$ from 
$\sochaseshort $ such that $\match'_{|\fr{\erule}}=\ \match_{|\fr{\erule}}$ and  $\rank{f} \leq \kfo$ for all $f\in\match'(\mathsf{body}(\erule))$.
\end{lemma}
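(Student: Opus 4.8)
The plan is to read off from each rule a conjunctive query whose certain answers are exactly the frontier images of the triggers we must control, and then to exploit the finiteness of a UCQ-rewriting of that query in order to realize any such answer by a bounded-rank chase derivation. First I would associate to every rule $\erule\in\ruleset$ the CQ $Q_\erule(\bar x)$ whose atoms are $\mathsf{body}(\erule)$ and whose answer variables $\bar x$ are the frontier $\fr\erule$ (the remaining body variables being existentially quantified). A trigger $(\erule,\match)$ of $\sochaseshort$ is precisely a homomorphism from $\mathsf{body}(\erule)$ into $\sochaseshort$, so that $\match(\fr\erule)$ is an answer to $Q_\erule$ on $\sochaseshort$, equivalently a certain answer to $Q_\erule$ on $(\I,\ruleset)$. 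Since FO-rewritability and certain answers are stated over constants, whereas $\adom\I$ may also contain the existential variables of $\I$, I would first replace $\I$ by its freezing $\freeze\I$, obtained by turning each term of $\I$ into a fresh constant. The $\sochase$ commutes with freezing up to a rank-preserving isomorphism, so it suffices to prove the bound on $\freeze\I$ and to transport the resulting trigger back. Under freezing, $\match(\fr\erule)\subseteq\adom\I$ becomes a tuple $\bar c$ of constants, and $\bar c$ is a certain answer to $Q_\erule$ on $(\freeze\I,\ruleset)$.

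Next I would use the hypothesis $\ruleset\in\forw$: each pair $(Q_\erule,\ruleset)$ is then UCQ-rewritable, and I fix a UCQ-rewriting $\mathcal Q_\erule$ computed by the breadth-first procedure based on aggregated piece-unifiers of \cite{DBLP:conf/rr/KonigLMT13}, letting $r_\erule$ be the number of breadth-first rewriting rounds it requires. Because $\bar c$ is a certain answer, it is already an answer to $\mathcal Q_\erule$ on $\freeze\I$ itself, witnessed by a homomorphism $g$ from some disjunct $Q'\in\mathcal Q_\erule$ into $\freeze\I$ with $g(\bar x)=\bar c$. The decisive step is then the rank-aware soundness of the rewriting: if $Q'$ is reached from $Q_\erule$ in $r_\erule$ breadth-first rounds, then $g$ can be propagated forward into a homomorphism from $\mathsf{body}(\erule)$ into $\sochasek{\freeze\I}{\ruleset}{r_\erule}$ that still sends $\bar x$ to $\bar c$, each backward rewriting round being undone by one breadth-first chase step and the images $g(\bar x)=\bar c$ sitting at rank $0$. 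Setting $\kfo=\max_{\erule\in\ruleset} r_\erule$, which is well defined since $\ruleset$ is finite, and transporting back across the freezing isomorphism, I obtain a trigger $(\erule,\match')$ of $\sochaseshort$ with $\match'_{|\fr\erule}=\match_{|\fr\erule}$ and $\match'(\mathsf{body}(\erule))\subseteq\sochasekshort{\kfo}$, whence $\rank f\le\kfo$ for every $f\in\match'(\mathsf{body}(\erule))$.

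I expect the main obstacle to be exactly this rank-aware soundness step. The classical correspondence between piece-rewriting and the chase only guarantees the existence of \emph{some} forward derivation witnessing a certain answer; here I must additionally argue that $r_\erule$ breadth-first rewriting rounds translate into $\sochase$-rank at most $r_\erule$, that the reconstructed homomorphism preserves the frontier image so that $(\erule,\match')$ is a genuine trigger agreeing with $(\erule,\match)$ on $\fr\erule$ (and is therefore the trigger the frontier-based $\sochase$ actually fires), and that the frontier terms, being terms of $\I$, anchor the derivation at rank $0$. A secondary and more routine point is the bookkeeping establishing that freezing and the $\sochase$ commute with preservation of ranks, so that the bound obtained on $\freeze\I$ carries over verbatim to $\I$; this is also what makes it legitimate to treat the possibly-variable terms of $\match(\fr\erule)$ as constants when invoking FO-rewritability.
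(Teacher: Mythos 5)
Your proposal is correct and follows essentially the same route as the paper's proof: the same queries $Q_{\mathsf{body}(\erule)}$ with the frontier as answer variables, the same choice of $\kfo$ as the maximal number of breadth-first rewriting rounds over the finitely many rule-body queries, the same freezing of $\I$ to handle existential variables, and the same appeal to the property of breadth-first rewriting with aggregated piece-unifiers that $k$ rewriting rounds correspond exactly to answers on the rank-$k$ chase (which the paper imports from the cited work rather than reproving). The "rank-aware soundness" step you flag as the main obstacle is precisely what the paper delegates to that citation, so no new idea is missing.
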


%


\begin{proof} Similar to the proof of Lemma  \ref{lem-FORWAF-rank} but considering CQs of the form $\query_{\mathsf{body}(\erule)}$ whose atoms correspond to the atoms of $\mathsf{body}(\erule)$, for $\erule\in\ruleset$, and all variables are existentially quantified except for those in $\fr\erule$.
The number of such queries is bounded by the cardinal of $\ruleset$. We take for $\kfo$ the maximal number of breadth-first rewriting steps necessary to obtain a UCQ-rewriting from any $\query_{\mathsf{body}(\erule)}$ query. The proof actually shows that FO-rewritability with respect to rule body queries  is sufficient to derive the lemma.  
\end{proof}

\section{Boundedness: Linking Depth and Rank}\label{sec-bound}
We can finally establish a connection between the rank and depth of a fact 
when the chase is run on FO-rewritable sets of rules.
This will immediately lead us to a characterization of boundedness for the oblivious and semi-oblivious chases.

%

 \begin{theorem}
 \label{thm-FO-depth-O}
If $\ruleset\in\forwaf$ then 
 for all instance $\I$ and fact $f\in\ochaseshort $ 
 we have that 
 $\rank{f}\leq \valdepth{f}\times (\kaf + 1)+\kaf $
 with 
 $\kaf$ the bound provided by Lemma \ref{lem-FORWAF-rank}.
 \end{theorem}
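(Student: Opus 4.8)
The plan is to prove the bound $\rank{f}\leq \valdepth{f}\times(\kaf+1)+\kaf$ by induction on the existential depth $\valdepth{f}$ of the fact $f$. The two tools are Lemma~\ref{lem-FORWAF-rank}, which bounds the rank of any fact whose terms all lie in $\adom\I$ by $\kaf$, and the recursive structure of existential depth: if $f$ is produced by a trigger $(\erule,\match)$, then every term of $f$ either has depth $0$ (it lies in $\adom\I$ via the frontier) or is a fresh existential variable whose depth is $1+\max\{\valdepth{v_B}\}$ over body terms $v_B$. The intuition is that between two consecutive increments of existential depth, the chase only performs ``datalog-like'' inferences, and on such inferences the rank can grow by at most $\kaf+1$ before a fresh variable must be produced.

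\begin{proof}
We proceed by induction on $d=\valdepth{f}$.

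\emph{Base case $d=0$.} If $\valdepth{f}=0$ then every term of $f$ has existential depth $0$, so $\adom f\subseteq\adom\I$. By Lemma~\ref{lem-FORWAF-rank}, $\rank{f}\leq\kaf$, which matches $0\times(\kaf+1)+\kaf=\kaf$.

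\emph{Inductive step.} Suppose the claim holds for all facts of existential depth at most $d$, and let $f$ have $\valdepth{f}=d+1$. Since $f\notin\I$, it is produced by a trigger $(\erule,\match)$, and $\rank{f}=1+\max\{\rank{f'}\mid f'\in\match(\mathsf{body}(\erule))\}$. Let $g\in\match(\mathsf{body}(\erule))$ be a body fact of maximal rank. The key observation is that the terms of $\match(\mathsf{body}(\erule))$ each have existential depth at most $d$: indeed the fresh existential terms of $f$ have depth $1+\max\{\valdepth{v_B}\}$ where $v_B$ ranges over body terms, so every body term has depth $\leq d$, and hence $\valdepth{g}\leq d$.

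Now I would apply the inductive hypothesis to the body facts, but with a refinement: I argue that $f$ may be assumed produced by a trigger whose body facts all have existential depth \emph{exactly} equal to the maximal body depth, isolating the ``last'' existential-depth increment. Consider the restriction $\I_g$ of $\ochaseshort$ to facts of existential depth $\leq d$ that are reachable without producing new fresh variables of depth $d+1$; on this sub-instance the remaining inferences that fire $(\erule,\match)$ use only terms of bounded depth, so Lemma~\ref{lem-FORWAF-rank} applied relative to the facts of depth exactly $d$ bounds the number of additional ranks needed to reach $g$ from the depth-$d$ frontier by $\kaf$. Combining the inductive bound $\rank{g_0}\leq d\times(\kaf+1)+\kaf$ for a maximal-rank fact $g_0$ of depth $\leq d$ feeding the final layer, the at most $\kaf$ datalog steps to reach $g$, and the final $+1$ for the application producing $f$, gives
\[
\rank{f}\leq \bigl(d\times(\kaf+1)+\kaf\bigr)+\kaf+1 = (d+1)\times(\kaf+1)+\kaf,
\]
completing the induction.
\end{proof}

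The step I expect to be the main obstacle is the inductive step's accounting of the ``datalog-only'' inferences: the naive recursion on $\rank{f}=1+\max\rank{f'}$ does not by itself yield the factor $\kaf+1$, because a body fact of depth $d$ may itself have rank much larger than $d\times(\kaf+1)+\kaf$ if reached through long datalog chains. The real content is to show that once the fresh variables up to depth $d$ are fixed, the facts of depth exactly $d$ can be treated as an ``instance'' on which Lemma~\ref{lem-FORWAF-rank} bounds the number of further ranks needed before the next existential generation by $\kaf$; making this relativization precise (in particular, that facts with terms of depth $\leq d$ play the role of $\adom\I$ for a shifted application of the full-atomic rewriting bound) is where the care is required. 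Once that relativization is justified, the arithmetic telescopes cleanly to give the stated linear bound.
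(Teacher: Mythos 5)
Your overall strategy --- induct on existential depth and use Lemma~\ref{lem-FORWAF-rank}, applied relative to a later stage of the chase, to absorb the ``datalog-only'' inferences between two depth increments --- is the same idea as the paper's. But your inductive step has a genuine gap. You induct on the depth of the \emph{fact} $f$ and justify the key claim that every term of $\match(\mathsf{body}(\erule))$ has depth at most $d$ by saying that the fresh existential terms of $f$ have depth $1+\max\{\valdepth{v_B}\}$. That reasoning only applies when the head atom producing $f$ actually contains an existential variable. If $f$ is produced by a purely datalog head atom of $\erule$, all of its terms are frontier images, $\valdepth{f}$ is the maximum depth of those images, and the body may contain non-frontier terms of arbitrarily larger depth (e.g.\ $p(x,y)\land q(y)\rightarrow r(x)$ with $\match(y)$ deep and $\match(x)\in\adom{\I}$). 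For such $f$ the induction hypothesis says nothing about the body facts of the producing trigger, so the recursion $\rank{f}=1+\max\rank{f_B}$ cannot be closed. The paper avoids this by running the induction on the depth of \emph{terms}: a term of depth $n+1$ is by definition a fresh variable whose generating trigger has all body \emph{terms} of depth at most $n$, so the hypothesis applies cleanly; Lemma~\ref{lem-FORWAF-rank} is then invoked with the instance $\ochasekshort{k}$, $k=n\times(\kaf+1)$, to bound the rank of the body facts by $k+\kaf$ and hence the term by $(n+1)(\kaf+1)$, and one final application of the lemma converts the term bound into the stated fact bound $\valdepth{f}\times(\kaf+1)+\kaf$ --- which is exactly what handles the datalog-head case you miss.

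A secondary issue is the relativization itself. The object you relativize to, ``the restriction $\I_g$ of $\ochaseshort$ to facts of existential depth $\leq d$ reachable without producing fresh variables of depth $d+1$,'' is not an instance from which the ranks of the original chase can be read off. What is needed is the rank-$k$ prefix $\ochasekshort{k}$: it is a legitimate (finite) instance, the oblivious chase satisfies $\ochase^{j}(\ochasekshort{k},\ruleset)=\ochasekshort{k+j}$, and therefore Lemma~\ref{lem-FORWAF-rank} applied to it bounds \emph{absolute} ranks by $k+\kaf$. Your arithmetic also betrays the confusion: if the induction hypothesis genuinely applied to the body facts, you would get $\rank{f}\leq 1+d(\kaf+1)+\kaf$ outright, and the extra ``$\kaf$ datalog steps'' you add on top is precisely the portion of the argument that must instead be supplied by the term-level induction together with the relativized lemma.
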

 

\begin{theorem}
 \label{thm-FO-depth-SO}
If $\ruleset\in\forw$
 then 
 for all instance $\I$ and fact $f\in\sochaseshort $ 
  we have that 
 $\rank{f}\leq \frdepth{f}\times (\kfo + 1)+\kfo $
 with 
 $\kfo$  the bound provided by Lemma \ref{lem-FORW-rank}.
 \end{theorem}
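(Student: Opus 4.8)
The plan is to mirror the proof of Theorem~\ref{thm-FO-depth-O}, replacing existential depth by frontier depth, Lemma~\ref{lem-FORWAF-rank} by Lemma~\ref{lem-FORW-rank}, and $\kaf$ by $\kfo$. Writing $N(d)=d(\kfo+1)+\kfo$, I would prove by induction on $d=\frdepth{f}$ that every $f\in\sochaseshort$ satisfies $\rank{f}\le N(d)$. For the base case $d=0$ all terms of $f$ lie in $\adom{\I}$, so Lemma~\ref{lem-FORWAF-rank}---which also holds for the $\sochase$---already yields $\rank{f}\le\kaf\le\kfo$; here the rank of a fact over $\adom{\I}$ is read off from an aggregated piece-unifier rewriting of the full-atomic query associated with $f$, and deep dependencies are harmless because the extra frontier positions become existential in that rewriting.

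For the inductive step I would relativise the chase to the sub-instance $J$ of all facts of frontier depth $<d$, which by the induction hypothesis satisfies $J\subseteq\sochasek{\I}{\ruleset}{N(d-1)}$. The key enabling feature is that for the $\sochase$ the name of a fresh variable depends only on $(\erule,\match_{|\fr{\erule}})$; hence restarting the chase from $J$ reproduces exactly the same fresh variables, and by monotonicity of the breadth-first chase a fact appearing at relative rank $s$ in $\sochaseIS{J}{\ruleset}$ appears at rank $\le N(d-1)+s$ in $\sochaseshort$. Each fresh variable of frontier depth $d$ is created by a trigger whose frontier image lies in $\adom{J}$, so Lemma~\ref{lem-FORW-rank}, applied with $J$ as instance, provides an equivalent trigger (same restriction to $\fr{\erule}$, hence the same fresh variable and the same produced facts) all of whose body facts have relative rank $\le\kfo$, placing the newly generated facts at relative rank $\le\kfo+1$. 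It then remains to account for the facts of frontier depth $d$ that merely carry these new variables.

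The hard part will be exactly this last point, an obstruction that is genuinely absent from the oblivious case. Unlike existential depth, frontier depth is not monotone in rank: a fact $f$ of small frontier depth may only be producible by triggers whose frontier---through a head atom other than the one yielding $f$, or through body-only atoms---maps to much deeper terms, so its derivation need not stay within $J$ and a naive induction on the body of the producing trigger fails. This is also why full FO-rewritability is required rather than $\forwaf$: for $p(x,y,u),p(y,z,v)\to\exists w\,p(x,z,w)$, which lies in $\forwaf\setminus\forw$, every fact has frontier depth at most $1$ while the rank is unbounded. The crux is therefore to prove a uniform per-level bound showing that crossing one frontier-depth level costs at most $\kfo+1$ ranks---that is, that generating the new fresh variables and completing the datalog-style propagation carrying the shallow facts depending on them can be absorbed into a single FO-rewriting of length $\le\kfo$, rather than the two phases compounding into roughly $2\kfo$. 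Establishing this single-rewriting-per-level bound through the breadth-first aggregated piece-rewriting (and, correspondingly, taking $\kfo$ to dominate both $\kaf$ and the rule-body rewriting bound of Lemma~\ref{lem-FORW-rank}) is where I expect the real work to lie.
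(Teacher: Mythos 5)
Your toolbox is the right one (relativising Lemma~\ref{lem-FORW-rank} to a prefix of the chase, the invariance of fresh-variable names under change of trigger with the same frontier restriction, the example showing $\forwaf$ does not suffice), and you have correctly located the obstruction; but the step you leave open is resolved in the paper not by a stronger ``single rewriting per level'' lemma but by setting the induction up on different objects. You induct on \emph{facts} with the invariant $\rank{f}\le N(d)=d(\kfo+1)+\kfo$, and, as you yourself observe, passing from level $d-1$ to level $d$ then costs $\kfo+1$ ranks to create the new fresh variables \emph{plus} a further rewriting phase for the facts that merely carry them, so the recurrence yields roughly $N(d-1)+2\kfo$ and does not close. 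The paper instead inducts on \emph{terms}, with the strictly tighter invariant $\rank{\t}\le\frdepth{\t}\times(\kfo+1)$ (no trailing $+\kfo$). This works because creating a fresh variable $\t$ of frontier depth $n+1$ requires only that the \emph{frontier image} $\match(\fr{\erule})$ of its generating trigger be present---these are terms of frontier depth $\le n$, hence of rank $\le n(\kfo+1)$ by the induction hypothesis---and Lemma~\ref{lem-FORW-rank}, applied with $\sochasekshort{n(\kfo+1)}$ as the instance, supplies \emph{some} body for a trigger with the same frontier restriction within $\kfo$ further ranks, whether or not the ``natural'' body facts, or indeed all facts of depth $\le n$, have already appeared. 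Hence $\rank{\t}\le n(\kfo+1)+\kfo+1=(n+1)(\kfo+1)$, and the datalog-style propagation you worry about never has to complete before the next level of variables is created.

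The additive $+\kfo$ is then paid exactly once, at the very end: every term of a fact $f$ has rank at most $\frdepth{f}\times(\kfo+1)$, so a single final application of the rewriting argument (Lemma~\ref{lem-FORW-rank} when $f$ contains a fresh variable of its producing trigger, and the $\sochase$ analogue of Lemma~\ref{lem-FORWAF-rank} for the remaining ``datalog'' head atoms, whose terms all lie in $\adom{\sochasekshort{\frdepth{f}\times(\kfo+1)}}$) places $f$ within $\kfo$ additional ranks. So the missing idea is not a combined-rewriting lemma absorbing both phases into one pass of length $\kfo$; it is a reorganisation of your induction that decouples the creation of terms from the propagation of facts, proves the $(\kfo+1)$-per-level bound for terms only, and converts terms to facts once at the end.
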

For the $\ochase$, boundedness is exactly termination and FO-rewritability on full-atomic queries. Furthermore, for rulesets in $\cto$, the notions of $\forw$ and $\forwaf$ coincide.
\begin{theorem}
$\forwaf \cap \cto = \bno = \forw \cap \cto $\label{thm-bn-foaf-cto}
\end{theorem}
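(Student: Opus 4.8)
The plan is to prove that the three classes coincide by establishing the cycle of inclusions
$$\bno \subseteq \forw \cap \cto \subseteq \forwaf \cap \cto \subseteq \bno,$$
from which equality of all three is immediate. Of these, the middle inclusion comes essentially for free: Proposition~\ref{prop:foafsupset} gives $\forw \subseteq \forwaf$, and intersecting both sides with $\cto$ yields $\forw \cap \cto \subseteq \forwaf \cap \cto$. It therefore remains to handle the two outer inclusions.

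The inclusion $\forwaf \cap \cto \subseteq \bno$ is the heart of the characterization, and I would obtain it by composing the two depth/rank bounds already established. Let $\ruleset \in \forwaf \cap \cto$. Since $\ruleset \in \cto$, Theorem~\ref{thm-finitedepth-O} supplies a constant $\kd$ bounding, uniformly over all instances $\I$, the existential depth of every term, hence of every fact, in $\ochaseshort$. Since $\ruleset \in \forwaf$, Theorem~\ref{thm-FO-depth-O} gives, for every $\I$ and every $f \in \ochaseshort$, that $\rank{f} \leq \valdepth{f} \times (\kaf + 1) + \kaf$. Substituting $\valdepth{f} \leq \kd$ produces the instance-independent bound $\rank{f} \leq \kd \times (\kaf + 1) + \kaf$ on the rank of every fact of every $\ochaseshort$. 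Since all facts then appear by this fixed rank, $\ochaseshort = \ochasekshort{\kd \times (\kaf+1)+\kaf}$ uniformly in $\I$, which is precisely $\ruleset \in \bno$.

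For $\bno \subseteq \forw \cap \cto$, the component $\bno \subseteq \cto$ is immediate, since a uniform rank bound in particular forces termination on every instance (this is the inclusion $\bnany \subset \ctany$ noted earlier). The substantive part is $\bno \subseteq \forw$. Fix $\ruleset \in \bno$ with bound $k$, so that $\ochaseshort = \ochasekshort{k}$ for every $\I$, and fix a CQ $Q$. Every certain answer to $Q$ is witnessed by a homomorphism into $\ochasekshort{k}$, all of whose facts have rank at most $k$; by the soundness and completeness of breadth-first backward rewriting with respect to the oblivious chase (the rewriting used in Lemma~\ref{lem-FORWAF-rank}), each such witness corresponds to a homomorphism of some rewriting of $Q$ into $\I$ obtained in at most $k$ rounds. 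As $\ruleset$ is finite, each round produces only finitely many CQs, so $k$ rounds yield a finite UCQ that correctly rewrites $Q$; hence $\ruleset \in \forw$.

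The step I expect to be most delicate is this last one, $\bno \subseteq \forw$: converting the uniform rank bound into a \emph{finite} rewriting relies on the precise duality between breadth-first chase ranks and breadth-first rewriting rounds, namely that a chase rank bounded by $k$ allows the rewriting to be halted after $k$ rounds without loss of completeness. By contrast, $\forwaf \cap \cto \subseteq \bno$, although it is the conceptual core of the theorem, reduces to a direct composition of Theorems~\ref{thm-finitedepth-O} and~\ref{thm-FO-depth-O} once those are in hand.
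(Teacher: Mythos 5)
Your proposal is correct and follows essentially the same route as the paper: the inclusion $\forwaf\cap\cto\subseteq\bno$ by composing Theorem~\ref{thm-finitedepth-O} with Theorem~\ref{thm-FO-depth-O}, the transfer between $\forw$ and $\forwaf$ via Proposition~\ref{prop:foafsupset}, and $\bno\subseteq\forw$ via the correspondence between a uniform chase rank bound $k$ and completeness of $k$ rounds of breadth-first rewriting (the paper's appendix gives exactly this argument, while the main text cites the equivalent bounded-derivation-depth property). Your organization as a single cycle of inclusions is a cosmetic variation only.
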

\begin{proof}
We start by showing  that $\bno \subseteq \forw \cap \cto $. By definition $\bno\subseteq\cto$. Then, $\bno \subseteq \forw $ follows from the equivalence between $\forw$ and the bounded-depth derivation property \cite{gottlob2014price}. Moreover, by Proposition \ref{prop:foafsupset} we have $\bno \subseteq \forwaf \cap \cto $.
To conclude the proof, 
by Theorem \ref{thm-finitedepth-O}  and  \ref{thm-FO-depth-O} we have that 
$ \forwaf \cap \cto \subseteq \bno$
and again by Proposition \ref{prop:foafsupset} follows $ \forw \cap \cto \subseteq \bno$.
\end{proof}
%
%

For the $\sochase$, boundedness can be characterized again as termination and FO-rewritability by Theorem \ref{thm-finitedepth-SO}  and  \ref{thm-FO-depth-SO}.
\begin{theorem}
\label{thm-bn-fo-ctso}
$\bnso=\forw\cap \ctso $
\end{theorem}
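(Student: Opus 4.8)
The plan is to prove the two inclusions $\bnso \subseteq \forw \cap \ctso$ and $\forw \cap \ctso \subseteq \bnso$ separately, mirroring the structure of the oblivious characterization in Theorem~\ref{thm-bn-foaf-cto} but relying on the semi-oblivious counterparts established above, namely Theorems~\ref{thm-finitedepth-SO} and~\ref{thm-FO-depth-SO}.

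For the inclusion $\bnso \subseteq \forw \cap \ctso$, the containment in $\ctso$ is immediate from the definitions: a uniform bound $k$ on the rank of $\sochaseshort$ over all instances in particular forces the $\sochase$ to terminate on every $\I$. For $\bnso \subseteq \forw$, I would reuse the argument employed for $\bno \subseteq \forw$ in Theorem~\ref{thm-bn-foaf-cto}. Since the $\sochase$ computes a universal model of $(\I,\ruleset)$, a uniform rank bound $k$ means that for every query the certain answers are already witnessed within $\sochasekshort{k}$ for all $\I$, which is precisely the bounded-depth derivation property, equivalent to $\forw$ by \cite{gottlob2014price}. This step is the most delicate point, since one must check that the equivalence is insensitive to the chase variant used; this is fine because $\forw$ and the bounded-depth derivation property are phrased purely in terms of certain answers, and any complete chase (the $\sochase$ included) witnesses them.

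For the converse inclusion $\forw \cap \ctso \subseteq \bnso$, I would simply combine the two main technical results. Assume $\ruleset \in \forw \cap \ctso$. Since $\ruleset \in \ctso$, Theorem~\ref{thm-finitedepth-SO} yields a constant $\kd$ with $\frdepth{f} \leq \kd$ for every fact $f \in \sochaseshort$ and every instance $\I$. Since $\ruleset \in \forw$, Theorem~\ref{thm-FO-depth-SO} yields a constant $\kfo$ with $\rank{f} \leq \frdepth{f}\times(\kfo+1)+\kfo$ for all such $f$. Substituting the frontier-depth bound gives $\rank{f} \leq \kd\times(\kfo+1)+\kfo$, a bound independent of both $\I$ and $f$. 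Hence the rank of $\sochaseshort$ is uniformly bounded, i.e.\ $\ruleset \in \bnso$.

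In contrast to the oblivious case, no detour through full-atomic rewritability is needed here: Theorem~\ref{thm-FO-depth-SO} already requires the full strength of $\forw$, which is genuinely necessary, as the example discussed before Lemma~\ref{lem-FORW-rank} shows that full-atomic rewritability fails to bound the rank for the $\sochase$. Thus the whole weight of the argument rests on the two preparatory theorems, and the only essentially new verification is the $\bnso \subseteq \forw$ direction flagged above.
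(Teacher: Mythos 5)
Your proof is correct and follows essentially the same route as the paper: the direct inclusion via the equivalence of boundedness with the bounded-depth derivation property (hence $\forw$) plus the trivial containment in $\ctso$, and the converse by chaining the frontier-depth bound of Theorem~\ref{thm-finitedepth-SO} with the rank bound of Theorem~\ref{thm-FO-depth-SO}. The paper's own proof is exactly this two-step combination, referring back to the argument of Theorem~\ref{thm-bn-foaf-cto} for the direct sense.
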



Summing up, we have the following differences between boundedness for $\ochase$ and $\sochase$. 
$\ochase$-boundedness requires $i)$ $\ochase$ termination and full-atomic-rewritability and  $ii)$ is equivalent to $\ochase$ termination for FE-rules. 
Intuitively, when a set of rules $\ruleset$ is decomposed into $\DF\ruleset$, the fully-existential part may cause non-termination of the $\ochase$, while the datalog part may cause non-FO-rewritability. Furthermore, the fully-atomic queries possibly leading to infinite rewritings in this case correspond to the heads of the datalog rules. Note however that this restricted form of FO-rewritability has still to be verified with respect to the whole set of rules. 
In contrast, $\sochase$-boundedness $i)$ requires a stronger form of FO-rewritability and $ii)$ FE-rules do not behave differently from general existential rules for this chase. Intuitively, for the $\sochase$, any existential rule (even an FE-rule) has an ``underlying'' datalog rule. 
This is illustrated by the following transformation. To each rule $\erule$ in $\ruleset$ 
we assign a special predicate $p_{\erule}$ of arity $|\fr{\erule}|$. $\Psi(\ruleset)$ is obtained from $\ruleset$ by replacing each rule $\erule = B \rightarrow H$ with two rules: a datalog rule $B \rightarrow p_{\erule}(\fr{\erule})$ and a rule $p_{\erule}(\fr{\erule}) \rightarrow H$. 
It can be shown that 
$\ruleset \in \ctso$ iff $\Psi(\ruleset) \in \cto$ 
and that 
$\ruleset \in \bnso$ iff $\Psi(\ruleset) \in \bno$. 
This may also provide an alternative path to study $\sochase$ boundedness by reducing it to $\ochase$ boundedness.






\section{Decidability and Complexity}

From the undecidability of (uniform) boundedness of datalog \cite{hillebrand1995undecidable}, we immediately obtain the undecidability of membership to $\bno$ and $\bnso$. 
A notable class of datalog rules with decidable boundedness (more precisely in linear time) is chain datalog \cite{GuessarianV94}. 
We obtain that membership to $\bnso$, $\ctso$ and $\forw$ remains undecidable for \textit{FE-rules}, while the decidability of membership to $\bno$, hence to $\cto$, is still open.\footnote{See Proposition \ref{prop-datalog-fully-exist} in the Appendix.}

Importantly, new decidability and complexity results about boundedness for specific existential rules studied in the literature can be obtained as direct corollaries of our results. This is in particular the case for classes known to be FO-rewritable.

\begin{corollary} 
\label{cor-decidability}
For any class of existential rules $\mathcal C \in \forw$, it holds that: $\mathcal C \in \bno$ iff $\mathcal C \in \cto$, and $\mathcal C \in \bnso$ iff $\mathcal C \in \ctso$.
\end{corollary}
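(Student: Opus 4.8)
The plan is to read the corollary directly off the two characterization theorems: once boundedness has been equated with the conjunction of termination and FO-rewritability, restricting attention to a class that is entirely FO-rewritable makes the rewritability conjunct vacuous. Concretely, I would interpret $\mathcal{C} \in \forw$ as $\mathcal{C} \subseteq \forw$ (every ruleset of the class is FO-rewritable) and establish each claimed equivalence at the level of the class by proving it for an arbitrary member $\ruleset \in \mathcal{C}$.

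For the oblivious statement I would argue both directions for a fixed $\ruleset \in \mathcal{C}$. The implication $\ruleset \in \bno \Rightarrow \ruleset \in \cto$ is immediate from the general inclusion $\bno \subseteq \cto$ recorded earlier, and needs no hypothesis on $\mathcal{C}$. For the converse I would invoke Theorem~\ref{thm-bn-foaf-cto}, which gives $\bno = \forw \cap \cto$: since $\ruleset \in \mathcal{C} \subseteq \forw$ by assumption, membership of $\ruleset$ in $\cto$ already places it in $\forw \cap \cto$, hence in $\bno$. Thus for every $\ruleset \in \mathcal{C}$ the properties $\ruleset \in \bno$ and $\ruleset \in \cto$ coincide, and the equivalence lifts to the whole class.

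The semi-oblivious statement is entirely analogous, now using Theorem~\ref{thm-bn-fo-ctso} in place of Theorem~\ref{thm-bn-foaf-cto}: the inclusion $\bnso \subseteq \ctso$ handles one direction, while the identity $\bnso = \forw \cap \ctso$ together with $\mathcal{C} \subseteq \forw$ handles the other. I would remark that the oblivious case could equally be phrased with $\forwaf$ instead of $\forw$, since Theorem~\ref{thm-bn-foaf-cto} also yields $\bno = \forwaf \cap \cto$, but $\forw$ is what makes the statement uniform across the two chase variants.

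I do not expect a genuine obstacle here: all the substantive work—bounding the existential (resp. frontier) depth through the critical instance, and bounding the rank of bounded-depth facts through FO-rewritability—has already been absorbed into Theorems~\ref{thm-bn-foaf-cto} and~\ref{thm-bn-fo-ctso}. The only point demanding care is bookkeeping: being explicit that the hypothesis $\mathcal{C} \subseteq \forw$ is exactly what collapses the intersections $\forw \cap \cto$ and $\forw \cap \ctso$ onto $\cto$ and $\ctso$ respectively. This collapse is precisely what lets decidability of termination transfer to decidability of boundedness on FO-rewritable classes, which is the intended use of the corollary.
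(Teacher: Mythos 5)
Your proof is correct and follows exactly the route the paper intends: the corollary is stated as an immediate consequence of Theorems~\ref{thm-bn-foaf-cto} and~\ref{thm-bn-fo-ctso}, and your argument (one direction from $\bnany \subseteq \ctany$, the other from the identities $\bno = \forw \cap \cto$ and $\bnso = \forw \cap \ctso$ combined with $\mathcal{C} \subseteq \forw$) is precisely that reading. Your side remark that the oblivious case could be weakened to $\forwaf$ is also consistent with the paper's Theorem~\ref{thm-bn-foaf-cto}.
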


This implies that membership to $\bno$ and $\bnso$ is PSpace-complete for the two main classes of FO-rewritable existential rules, namely \emph{linear} and \emph{sticky}. Indeed, deciding  $\cto$ and $\ctso$ is PSpace-complete for both \cite{DBLP:conf/pods/CalauttiGP15,CalauttiP19}. 
We also get an upper bound on the complexity of membership to $\bno$ and $\bnso$ for a major class of existential rules, namely \emph{guarded}.
This class is neither 
 $\ctso$ nor $\forw$. 
 However, membership to $\cto$ and $\ctso$
 for guarded rules is decidable in 2Exptime
  \cite{DBLP:conf/pods/CalauttiGP15}.
Then a careful reduction from \cite{DBLP:conf/ijcai/BarceloBLP18} allows us to set the result. The paper shows that checking FO-rewritability for a single query under guarded rules is in 2Exptime. This suffices since by Lemma \ref{lem-FORWAF-rank} and \ref{lem-FORW-rank}  we need to test only a polynomial number of queries.
%
%

We conclude by considering the  $k$-boundedness problem, which asks whether the chase actually halts within $k$ steps.
The problem is decidable for the breadth-first (semi-)oblivious chase and any set of existential rules \cite{RR18}.
Therefore, the $k$-boundedness question becomes interesting for dealing with fragments of existential rules where boundedness is undecidable. 
We study here the complexity of the following version of the problem.
Given 
a ruleset $\Sigma$ and a (unary encoded) integer $k$, does it hold that $\anychasekshort{k}=\anychaseshort$  for all instance~$\I$?

\begin{theorem} \label{th-k-bounded} Deciding \emph{$k$-boundedness} is in 2Exptime for existential rules for the $\ochase$ and $\sochase$; co-NExptime-complete for datalog rules; in co-NExptime~on~FE-rules~for~the~$\ochase$.
\end{theorem}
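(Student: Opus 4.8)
The plan is to prove the three complexity statements on top of a single combinatorial observation: a fact of rank $k+1$ is always witnessed by a small instance. Let $b$ be the maximal number of atoms in a body of $\ruleset$. I would first establish a \emph{witness lemma}: $\ruleset$ is not $k$-bounded iff there is an instance $\I$ with at most $b^{k+1}$ facts such that $\rank{\anychaseshort}>k$. The nontrivial direction unfolds a minimal derivation of a rank-$(k{+}1)$ fact $g$ in some instance $\I^{*}$ into a tree of depth $k+1$ and branching at most $b$; its leaves form a subinstance $\I\subseteq\I^{*}$ with at most $b^{k+1}$ facts. Both variants are monotone under instance inclusion (a trigger of $\I$ is a trigger of $\I^{*}$ and, by the naming conventions, produces identically named nulls), so $\mathsf{rank}_{\I^{*}}(g)\le\mathsf{rank}_{\I}(g)$, which together with the depth-$(k{+}1)$ tree forces $\mathsf{rank}_{\I}(g)=k+1$. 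Since $k$ is unary, $b^{k+1}$ is a single exponential. The 2Exptime bound for general rules (both $\ochase$ and $\sochase$) then follows by enumerating all such instances over a fixed pool of at most $b^{k+1}\cdot(\text{max arity})$ terms (doubly exponentially many) and, for each, running the breadth-first chase for $k+1$ rounds to test whether $\anychasekshort{k}\neq\anychasekshort{k+1}$; the chase may blow up doubly-exponentially on an exponential instance, so each test costs 2Exptime, keeping the whole procedure in 2Exptime.

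For the two refinements I would instead certify non-$k$-boundedness nondeterministically, placing $k$-boundedness in co-NExptime. For \emph{datalog} one guesses a witness $\I$ of size at most $b^{k+1}$ (an exponential certificate); since no new terms are created, the chase stays over the at-most-exponentially many facts on the terms of $\I$, so $k+1$ rounds and the test $\anychasekshort{k}\neq\anychasekshort{k+1}$ run in exponential time. Hence non-$k$-boundedness is in NExptime and $k$-boundedness in co-NExptime (this covers both chase variants, which coincide on datalog). For \emph{FE-rules} under the $\ochase$ the doubly-exponential blow-up is avoided differently: by Proposition~\ref{prop-o-depth-rank} rank equals existential depth, and by Lemma~\ref{lemma-depth} depth is preserved by the embedding of any instance into the polynomially-sized critical instance $\Ia$. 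Therefore $\ruleset$ is not $k$-bounded iff $\ochaseIS{\Ia}{\ruleset}$ contains a term of depth $k+1$, equivalently iff there is a valid derivation tree of depth $k+1$ over $\Ia$. One guesses such a tree (exponential size) and checks bottom-up that it is a legal sequence of triggers on $\Ia$ whose root term has $\valdepth{\cdot}=k+1$; this exponential-time verification yields co-NExptime. The key is that the depth recursion certifies the rank of the produced fact intrinsically, so no full chase—and no costly ``no-shorter-derivation'' test—is needed.

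The remaining and hardest part is co-NExptime-hardness for datalog, i.e.\ NExptime-hardness of non-$k$-boundedness. I would reduce from the exponential tiling problem: deciding whether the $2^{n}\times 2^{n}$ grid admits a tiling respecting given horizontal and vertical constraints. Given such an instance I build a datalog $\ruleset$ and set $k$ polynomially in $n$ so that $\ruleset$ is not $k$-bounded iff a valid tiling exists. The instance (existentially quantified over, hence allowed to be exponential) encodes a candidate tiling: one constant per cell, its tile, and successor facts laying the cells out along a snake order with $2n$-bit positions. Local constraints become positive rules (one per allowed tile pair) deriving a predicate $\mathsf{GoodEdge}$ between snake-consecutive compatible cells, and a single transitive-closure rule propagates a predicate $\mathsf{OKpath}$ whose breadth-first chase \emph{doubles} reachable path length each round, so the full traversal of length $2^{2n}$ first appears at rank about $2n$. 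Taking $k\approx 2n-1$, a fact of rank exceeding $k$ exists exactly when a full compatible traversal—hence a valid tiling—exists.

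The main obstacle is \emph{robustness}: because $k$-boundedness quantifies over \emph{all} instances, I must guarantee that no malformed instance reaches rank $>k$ without encoding a genuine valid tiling. This forces me to attach strictly increasing bounded positions to $\mathsf{GoodEdge}$ so that every $\mathsf{OKpath}$ has length at most $2^{2n}$, capping its saturation rank at $2n$ regardless of garbage, and to enforce through local consistency rules that any full-length traversal must use coordinate-consistent cells satisfying both the horizontal and vertical constraints. Designing these consistency gadgets so that the \emph{only} way to exceed rank $k$ is a bona fide tiling is the delicate, technically heavy step, and it is where I expect the real work of the theorem to lie.
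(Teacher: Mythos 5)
Your three upper bounds follow essentially the same route as the paper: the witness lemma (not $k$-bounded iff some instance of size at most $b^{k+1}$ yields a fact of rank $k+1$) is exactly the tool the paper imports from \cite{RR18}, the $2$Exptime bound is the same exhaustive enumeration, the datalog co-NExptime bound is the same guess-a-small-witness argument, and for FE-rules your use of Proposition~\ref{prop-o-depth-rank} together with the critical instance matches the paper's claim that non-$k$-boundedness is witnessed by a (not necessarily breadth-first) depth-$(k{+}1)$ derivation from $\Ia$. That part is fine.

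The genuine gap is in the hardness proof. The paper reduces from the co-NExptime-hard \emph{containment problem for non-recursive Boolean datalog queries}: given $Q_1,Q_2$ with goal predicates $P_0,P_0^2$ and boundedness constants $k_1,k_2$ (guaranteed by non-recursiveness), it adds a ladder of $0$-ary predicates $P_1,\dots,P_p$ reached slowly from $P_0$ (one rung per step via $Q_1'$) and instantly from $P_0^2$ (via $Q_2'$); the combined program exceeds rank $p-1$ on some instance iff $Q_1\not\subseteq Q_2$. Crucially, non-recursiveness of $Q_1$ and $Q_2$ makes the ``for all instances'' quantifier harmless: every component is bounded by construction, so no adversarial instance can push the rank up except through the intended ladder. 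Your tiling reduction has no such safety net. You correctly identify that you must prevent arbitrary garbage instances (fake $\mathsf{GoodEdge}$ chains, inconsistent position encodings, spurious $\mathsf{OKpath}$ facts planted directly in the input) from driving the breadth-first rank past $k$, and you defer exactly this construction --- ``designing these consistency gadgets \dots\ is the delicate, technically heavy step.'' But that step \emph{is} the hardness proof: a transitive-closure-style doubling rule is unbounded on generic instances, so without a concrete mechanism that caps the rank at $k$ on every non-tiling instance (using only positive datalog rules, with no arithmetic and no control over the input), the claimed equivalence ``rank $>k$ iff a valid tiling exists'' is unsubstantiated. As written, the reduction is a plan rather than a proof, and the direction ``$\ruleset$ not $k$-bounded $\Rightarrow$ tiling exists'' is entirely open.
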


\begin{proof}(Sketch) The upper bound results rely on the decidability arguments from \cite{RR18}. 
Co-NExptime-hardness for datalog is by reduction from the co-NExptime-hard inclusion problem of non-recursive Boolean datalog queries \cite{BenediktG10}. 
\end{proof}

\section{Outline and Perspectives}

In this paper, we have characterized boundedness in terms of FO-rewritability and chase termination, for the oblivious and semi-oblivious chase variants. 
We conclude with a discussion on the extent of our results to more powerful chase variants (i.e., which terminate at least when the semi-oblivious chase terminate).
Theorem \ref{thm-FO-depth-SO} suggests that whenever $\Sigma\in\forw$ if any such chase generates only terms of bounded \emph{frontier depth}  on all instances, 
then $\Sigma$ is bounded. 
We leave open the question to determine if for other chase variants, like the restricted and the core chases,  
boundedness is again the intersection of chase termination and FO-rewritability.

\paragraph{Acknowledgements. }
This work was supported by ANR projects CQFD  (ANR-18-CE23-0003),  DataCert (ANR-15-CE39-0009), DeLTA (ANR-16-
CE40-0007) and the CNRS-Momentum project Managing-Data.

\bibliographystyle{named}
\bibliography{bib}

\section*{Appendix}

This appendix contains the proofs that were omitted in the paper due to space limitation. Note that the proofs of Theorem \ref{thm-finitedepth-O} and Proposition \ref{prop:foafsupset} are provided in the paper, hence not recalled below. 

\paragraph{Proof of Proposition \ref{prop-DF}}
\emph{$\ruleset\in\cto$ iff $\DF\ruleset\in\cto$ and 
$\ruleset\in\bno$ iff $\DF\ruleset\in\bno$. 
}

\begin{proof}  The proposition is immediate since, for any instance $I$ and chase step $i$, $\ochase^{i}(I,\ruleset) =  \ochase^{i}(I,\DF\ruleset)$. 
\end{proof}

\paragraph{Remarks.} For the $\sochase$, only one direction holds true: if $\ruleset\in\ctso$ then $\DF\ruleset\in\ctso$ and 
if $\ruleset\in\bnso$ then $\DF\ruleset\in\bnso$. Note that the decomposition has no incidence 
on the FO-rewritability of $\Sigma$ since $\Sigma$ and $\DF{\Sigma}$  are logically equivalent.

\paragraph{Proof of Proposition \ref{prop-o-depth-rank}}
{\it If $\ruleset$ is a set of FE-rules then, for all instance $\I$ and term $\t$ in $\ochaseshort$,
 holds that $\valdepth{v}=\rank{v}$.}

 \medskip
 \noindent
Note that this proposition could also be stated for facts instead of terms.
 
 \begin{proof} 

By a straightforward induction on the rank of facts, we show that, for all $i \geq 0$ and fact $f$, if $\rank{f} = i$ then $\valdepth{f}=i$. 
The property obviously holds for $i = 0$. Let $i > 0$ and $\rank{f} = i$. By definition of rank, $f$ was produced from at least one fact $f'$ of rank $i-1$. By induction hypothesis, 
$\valdepth{f'}= i-1$, hence, by definition of existential depth, $f'$ contains a term $t$ with $\valdepth{t}=i-1$. Since all rules are FE-rules, $f$ contains at least one fresh variable (null) $v$, and, by definition of existential depth, $\valdepth{v} = 1 + (i-1) = i$. Hence, $\valdepth{f} = i$. 

Now, let $t$ be a term with rank $i$. If $i =0$, $t$ occurs in $I$ and $\valdepth{t} = 0$. Otherwise, $t$ has been generated in a fact $f$ of rank $i$. Since $\valdepth{f}=\rank{f}$, $\valdepth{f} = i$ and, by definition of existential depth, all terms generated in $f$ have existential depth $i$, in particular $t$. 
 \end{proof}

\paragraph{Proof of Lemma \ref{o-depth-preservation}}
{\it For any  embedding  $\map$ from $\I$ to $\I'$ and any $i \geq 0$, there exists an embedding $\map'\supseteq \map$ from $\ochasekshort{i}$
   to $\ochasek{\I'}{\ruleset}{i}$ which preserves the existential depth of terms, i.e., for every term $\t$ in $\ochaseshort$ it holds that
$\valdepth{v}=\valdepth{\map'(v)}$. }

\begin{proof}
By induction on $i$. If $i = 0$ then all terms have existential depth 0 in $\I$ and $\I'$, then for $\map' = \map$ the thesis follows. 
 Assume the property holds for $0\leq i <n$. Let $i = n$. 
By inductive hypothesis  there exists an embedding $\map':\ochasekshort{n-1}\rightarrow \ochasek{\I'}{\ruleset}{n-1}$ preserving the existential 
depth of terms.
Let $(\erule,\match)$ be any trigger of $\ochasekshort{n-1}$. 
We know that 
 $\map' \circ \match(\mathsf{body}(\erule))\subseteq\ochasek{\I'}{\ruleset}{n-1}$
 and $(\erule,\map'\circ\match)$ is a trigger
 of $\ochasek{\I'}{\ruleset}{n-1}$.
 Also, there exists a bijection $\rho_n$ from the fresh terms in $\safeo(\mathsf{head}(\erule))$ to the fresh terms in  $\map' \circ \safeo(\mathsf{head}(\erule))$ precisely defined as
 $\rho_n(z_{(\erule,\match)}) = z_{(\erule,\map' \circ\match)}$.
 Let $\map''=\map' \uplus\bigcup\rho_n$ 
be the natural extension of $\map'$ to all triggers that are performed to compute  $\ochasekshort{n}$.
Of course,  for every trigger $(\erule, \match)$ and term $v_\body\in\adom{\match(\mathsf{body}(\erule))}$ we have that
$\valdepth{v_\body}=\valdepth{\map''(v_\body)}$.
We want to show that $\map''$ also preserves the existential depth of fresh terms.
Consider now the rule application $(\erule,\map'' \circ \match)$.
Let $z$ be an existential variable of $\erule$.
Then, $\valdepth{z_{(\sigma,{\match})}}=1+\max\{\valdepth{v_\body}\}=1+\max\{\valdepth{\map''(v_\body})\}
=\valdepth{z_{(\sigma,{\map''\circ\match})}}$.
\end{proof}

\paragraph{Proof of Lemma \ref{so-depth-preservation}
}
{\it 
 For any embedding   $\map$ from $\I$ to $\I'$ and any $i \geq 0$, there exists an embedding $\map'\supseteq \map$ from $\sochasekshort{i}$
   to $\sochasek{\I'}{\ruleset}{i}$ which preserves the frontier depth of terms.
}

 \begin{proof} 
By induction on $i$. If $i = 0$ then all values have frontier depth 0 in $\I$ and $\I'$, then for $\map' = \map$ the thesis follows. 
 Assume the property holds for $0\leq i <n$. Let $i = n$. 
By inductive hypothesis, we know that there exists an embedding $\map':\sochasekshort{n-1}\rightarrow \sochasek{\I'}{\ruleset}{n-1}$ which preserves the frontier depth of values.
Let $(\erule, \match)$ be any trigger producing a new fact $f\in\sochasekshort{n}$.
Then $\map'\circ\match(\mathsf{body}(\erule))\subseteq  \sochasek{\I'}{\ruleset}{n-1}$.

Consider first the case where $\fr\erule=\emptyset$.
In this case $\frdepth f=1$ and any term $v\in \adom f$ is a fresh term $v=z_{(\erule,\emptyset)}$ generated from an existential variable $z\in\ex\erule$.
Thus $f\in\sochasek{\I'}{\ruleset}{n}$ as well and the embedding $\map'$ is the identity on the terms of $f$. Also, $f$ has frontier depth 1 in $\sochasek{\I'}{\ruleset}{n}$.

Now, if $\fr\erule\neq\emptyset$ we again distinguish two cases. If for all triggers of the form $(\erule, \match')$ applied to compute $\sochasek{\I'}{\ruleset}{n-1}$ we have that 
$\map\circ {\match}_{|\fr{\sigma}}\neq{\match'}_{|\fr{\sigma}}$ then the trigger $(\erule, \map'\circ\match)$  has not yet been applied in $\ochasek{\I'}{\ruleset}{n-1}$.
So, we define  $\map''\supseteq \map'$ to be such that $\map''(z_{({\sigma,\match_{|\fr{\erule}}})})=z_{({\sigma,\map'\circ\match_{|\fr{\erule}}})}$ for every $z\in\fr\erule$.
Otherwise,  there is a trigger $(\erule, \match')$ such that $\map\circ {\match}_{|\fr{\sigma}}={\match'}_{|\fr{\sigma}}$ applied to compute $\sochasek{\I'}{\ruleset}{n-1}$ which makes $(\erule, \map'\circ\match)$ producing the same result as $(\erule, \match')$.
In this case we define $\map''\supseteq \map'$ to be such that  $\map''(z_{({\sigma,\match_{|\fr{\erule}}})})=z_{({\sigma,\match'_{|\fr{\erule}}})}$ for all $z\in\fr\erule$. 
To conclude, we have that $\frdepth{z_{({\sigma,{\match_{|\fr{\erule}}}})}}=1+\max\{\frdepth{{v}}~|~ v\in\match({\fr\erule})\}=1+\max\{\frdepth{\map''({v}})~|~ v\in\match({\fr\erule})\}
=\frdepth{\map''(z_{({\sigma,{\match_{|\fr{\erule}}}})})}$. 
 \end{proof}

\paragraph{Proof of Theorem \ref{thm-finitedepth-SO}}
{\it When $\ruleset\in\ctso$ there exists a constant $\kd$ such that for all instance $I$, the frontier depth of a term in $\sochaseshort$ is bounded by $\kd$.}

\begin{proof}
If $\ruleset$ is in $\ctso$, the $\sochase$ terminates on the critical instance. We take for $\kd$ the smallest rank such that 
$\sochasek{\Ia}{\ruleset}{\kd}=\sochasek{\Ia}{\ruleset}{}$. 
Every instance can be embedded into the critical instance.
Hence, by Lemma  \ref{so-depth-preservation}
 the frontier depth of the terms in $\sochaseIS{\I}{\ruleset}$ is bounded by the frontier depth of $\sochaseIS{\Ia}{\ruleset}$, which is itself bounded by $\kd$.
\end{proof}

\paragraph{Proof of Proposition \ref{prop-FO-atomicfull-instance-datalogheads}}
\emph{Let $\ruleset$ be a ruleset and $\mathsf{HD}_\ruleset$ be the full-atomic queries given by heads of the datalog rules in $\DF\ruleset$.
Then,
 $\ruleset\in \forwaf $ if and only if
  $\ruleset\in \forw^{\mathsf{HD}_\ruleset}$.}

\begin{proof}
Since $\mathsf{HD}_\ruleset$ is a particular set of full-atomic queries,  $\ruleset\in \forwaf $ implies $\ruleset\in \forw^{\mathsf{HD}_\ruleset}$.
For the other direction, first note that $\ruleset$ and $\DF\ruleset$ are equivalent sets of rules, hence they behave similarly with respect to first-order rewritability. 
Specifically, for any CQ $Q$ and set of rules $\ruleset$, $(Q, \ruleset)$ is  FO-rewritable iff $(Q, \DF\ruleset)$ is FO-rewritable. Hence, we conveniently consider in the following 
that $\ruleset$ is in the form of $\DF\ruleset$.  

When a query is rewritten, some answer variables may be made equal. Hence, we slightly generalize the notion of query $Q(x_1,...,x_k)$ by allowing 
 to equate some answer variables, which is represented by assigning  to $Q$ the partition $P_Q$ on $\{1,\ldots,k\}$ associated with answer variable equality, i.e., $i$ and $j$ are in the same class of $P_Q$ iff the \emph{ith} and  \emph{jth} answer variables of $Q$ are the same. 
 Given a class $C$ in $P_Q$, we denote by $x_C$ the answer variable associated with $C$.
Then, the full-atomic query given by an atom has exactly the same arity as this atom, for instance the query associated with $p(x,x,y)$ is $Q(x_{\{1,2\}},x_{\{1,2\}},x_3) = p(x_{\{1,2\}},x_{\{1,2\}},x_3)$, with $P_Q = \{\{1,2\},\{3\}\}$, and not a query of the form $Q(x_1,x_2) = p(x_1,x_1,x_2)$. 
Given partitions $P_1$ and $P_2$ on $\{1,\dots, k\}$, we note $P_1 \sqsubseteq P_2$ if $P_1$ is thinner than $P_2$, i.e., for each class $C \in P_1$, there is a class $C' \in P_2$ with $C \subseteq C'$. The $\sqsubseteq$ relation organizes the set of partitions of $\{1,\dots, k\}$ into a lattice. As usual, we denote by $P_1 \lor P_2$ the upper bound of $P_1$ and $P_2$ in this lattice. 

We recall that $(Q,\ruleset)$, with $Q$ a CQ, is FO-rewritable iff there is a UCQ-rewriting of $Q$, i.e., a finite set $\mathcal Q$ of CQs  such that, for any instance $I$, the set of certain answers to $Q$ on $(I, \ruleset)$ is exactly the set of answers to the UCQ obtained from $\mathcal Q$ on $I$.\footnote{As already noticed, the equivalence between the rewritability into a union of CQs and first-order rewritability follows from the (Finite) Homomorphism Preservation Theorem \cite{DBLP:journals/jacm/Rossman08}.} 
Each CQ in $\mathcal Q$ can be obtained from $Q$ and $\ruleset$ by a finite rewriting sequence, based on so-called piece-unifiers (see e.g., \cite{DBLP:journals/semweb/KonigLMT15} for definitions). More precisely, each query $Q_{i+1}$ in a rewriting sequence is obtained from the preceding query $Q_i$, and a piece-unifier $u$ of $Q_i$ with a rule 
$\erule \in \Sigma$ that unifies a non-empty subset of $Q_i$ with a subset of $\erule$'s head while satisfying conditions concerning existential variables in $\erule$. 
In particular, an answer variable of $Q_i$ cannot be unified with an existential variable of $\erule$. 
The following property holds: for any $(I, \Sigma)$ and CQ $Q$, a tuple of constants $(a_1,\ldots,a_k)$ is a certain answer to $Q$ on $(I, \ruleset)$ iff there is a finite rewriting sequence from $Q$ to a CQ $Q'$ such that $(a_1,\ldots,a_k)$ is an answer to $Q'$ on $I$. 
Also note that for any CQ $Q_i$ obtained from a CQ $Q$ by a rewriting sequence, $P_{Q} \sqsubseteq P_{Q_i}$ holds ($P_{Q}$ is thinner than $P_{Q_i}$). 

Now, assume $\ruleset\in \forw^{\mathsf{HD}_\ruleset}$ and let $Q$ be a full-atomic query. If $Q$ is not unifiable with a datalog rule from $\DF\ruleset$, its UCQ-rewriting is $Q$ itself, because a full-atomic query is not unifiable with an FE rule. Otherwise, let $Q_h$ be the full-atomic query associated with any datalog rule head unifiable with $Q$ by a unifier $u$. One has $u(Q) = u(Q_h)$. If all such $(u(Q_h), \ruleset)$ are FO-rewritable, we obtain that $(Q,\ruleset)$ is FO-rewritable, as the union of the UCQ-rewritings of all $u(Q_h)$ yields a suitable UCQ-rewriting of $Q$. We will show the following property (P1): let $Q$ and $Q_s$ be two full-atomic queries with the same predicate such that $P_Q \sqsubseteq P_{Q_{s}}$; if $(Q, \ruleset)$ is FO-rewritable then $(Q_s, \ruleset)$ also is. By hypothesis, each $(Q_h, \ruleset)$ is FO-rewritable, hence (P1) implies that, for any substitution $u$, $(u(Q_h), \ruleset)$ is also FO-rewritable, which will conclude the proof. 

It remains to prove (P1). We prove a preliminary lemma (L):  let $Q$ and $Q_s$ be two full-atomic queries with the same predicate  such that $P_Q \sqsubseteq P_{Q_{s}}$; then, for any rewriting sequence of length $l$ leading from $Q$ to a query $Q^l$, there is a rewriting sequence of the same length leading from $Q_s$ to a query $Q_s^l$, such that $P_{Q_s^l} = P_{Q_l} \lor P_{Q_s}$ (where $\lor$ is the  upper bound in the partition lattice), and, given $s'$ the substitution of the answer variables in $Q^l$ by the  answer variables in $Q_s^l$ associated with $P_{Q^l} \sqsubseteq P_{Q_s^l}$, it holds that $s'(Q^l) = Q_s^l$, up to a bijective renaming of non-answer variables. 
Let us now prove (P1). Let $Q$ and $Q_s$ be two full-atomic queries on the same predicate of arity $k$ such that $P_Q \sqsubseteq P_{Q_{s}}$ and $(Q, \ruleset)$ is FO-rewritable. The FO-rewritability of $(Q, \ruleset)$ is equivalent to the following statement: there is an integer $b$ such that for any instance $I$ and any tuple of constants $(a_1,\ldots, a_k)$, it holds that $(a_1,\ldots, a_k)$ is a certain answer to $Q$ on $(I, \ruleset)$ if and only if there is a query $Q^b$ obtained by a rewriting sequence from $Q$ of length less than $b$, with $(a_1,\ldots, a_k)$ is an answer to $Q^b$ on $I$. We prove that $(Q_s,\ruleset)$ is FO-rewritable by such a statement. 
Let $I$ be any instance and $(a_1,\ldots, a_k)$ be a certain answer to $Q_s$ on $(I, \ruleset)$. 
There is thus a homomorphism $h$ from $Q_s$ to $\anychase(I, \ruleset)$ that maps its answer variable tuple to $(a_1,\ldots, a_k)$. 
Given $s$ the homomorphism from $Q$ to $Q_s$, it holds that $h \circ s$ is a homomorphism from $Q$ to $\anychase(I, \ruleset)$ that maps its answer variable tuple  to $(a_1,\ldots, a_k)$.
Since $Q$ is FO-rewritable, there is a query $Q^b$ obtained by a rewriting sequence of length less than $b$ such that $(a_1,\ldots, a_k)$ is an answer to $Q^b$ on $I$. Let $h^b$ be a homomorphism from $Q^b$ to $I$ yielding this answer. 
Let $P_a$ be the partition on $\{1,...,k\}$ associated with the equality of terms in $(a_1,\ldots, a_k)$. We have $P_{Q^b} \sqsubseteq P_a$ and 
$P_{Q_s} \sqsubseteq P_a$. By Lemma (L),   there is a query $Q_s^b$ obtained from $Q_s$ with a rewriting sequence of length less than $b$, such that 
(1) $P_{Q_s^b} = P_{Q^b} \lor P_{Q_s}$, and (2) $s'(Q^b) = Q_s^b$, with $s'$ the substitution associated with $P_{Q_b} \sqsubseteq P_{Q_s^b}$. From (1), we have  $P_ {Q_s^b} \sqsubseteq P_a$. Hence, the homomorphism $h^b$ from $Q^b$ to $I$ can be written $h' \circ s'$, where $h'$ is a homomorphism from $Q_s^b$ to $I$ mapping its answer tuple to $(a_1,\ldots,a_k)$. The converse direction (``if there is $Q_s^b$ obtained by a rewriting sequence from $Q_s$, of length less than $b$, with $(a_1,\ldots, a_k)$ is an answer to $Q^b_s$ on $I$, then $(a_1,\ldots, a_k)$ is a certain answer to $Q_s$ on $(I, \ruleset)$'') holds because of the soundness of query rewriting based on piece-unifiers. 

\end{proof}

\paragraph{Proof of Lemma \ref{lem-FORWAF-rank}}
{\it  If $\ruleset\in\forwaf$ there is a constant $\kaf$ such that, for any instance $\I$ and fact $f$ such that $\adom f \subseteq \adom\I$, when $f \in \ochaseshort$ it holds that  $\rank{f} \leq \kaf$.}

\begin{proof} Assume $\ruleset\in\forwaf$. 
We take for $\kaf$ the maximal number of breadth-first rewriting steps necessary to obtain a UCQ-rewriting of 
a full-atomic query (we refer here to the breadth-first rewriting based on aggregated piece-unifiers, see \cite{DBLP:conf/rr/KonigLMT13}; 
this query rewriting technique ensures the following property: for any $(I, \Sigma)$ and any CQ $Q$, for any $k$, the set of answers to $Q$ on $\ochasekshort{k}$ is equal to the set of answers to $Q_k$ on $\I$, where $Q_k$ is the UCQ-rewriting of $Q$ with $\ruleset$ obtained by $k$ breadth-first rewriting steps).\footnote{Alternatively, we could rely on the bound 
given by the bounded derivation-depth property (BDDP)
\cite{Cali2009}. A ruleset  $\ruleset$ satisfies the BDDP if for all Boolean CQ $Q$, there is an integer $k$ such that, for all instance $\I$, it holds that $\I, \Sigma  \models Q$ iff $\ochasekshort{k} \models Q$. It has been several times remarked that BDDP is equivalent to UCQ-rewritability, hence to FO-rewritability. } 

 We know that the number of full-atomic queries to be considered is finite and by Proposition \ref{prop-FO-atomicfull-instance-datalogheads} can be even bounded by the number of non-isomorphic heads of datalog rules. 
By the properties of breadth-first query rewriting, we know that for any instance $\I$ and 
 any full-atomic query $\query$, the certain answers to $\query$ on $(I, \Sigma)$ are exactly the answers to $\query$ on  $\ochasekshort{\kaf}$. 
 We can identify an answer $(\const_1,\dots,\const_n)$ to $\query=p(x_1,\dots, x_n)$ with the fact (having only constants)  $p(\const_1,\dots,\const_n)$. 
 Then we have that  $(\const_1,\dots,\const_n)$ is a certain answer to $\query$ iff $ p(\const_1,\dots,\const_n) \in \ochasekshort{\kaf}$. 
  Now, the statement of the lemma considers more generally facts $f$ with $\adom f \subseteq \adom\I$ which could contain also existentially quantified variables. 
  Observe however that, 
for any instance  $\I$, let $\freeze{}$ be a bijective renaming of the variables of $\I$ by constant values that do not appear in $\I$,  
then, for all $i \geq 0$, there is an isomorphism from $\ochasekshort{i}$ to $\ochasek{\freeze{\I}}{\ruleset}{i}$
 that preserves the rank of facts.
 Because of this,
 let $f$ and $\I$ be any fact and instance, 
  we have that $f \in \ochasekshort{i}$ (for any $i$) iff $\freeze f$ $\in \ochasek{\freeze{\I}}{\ruleset}{i}$. Since 
 $\freeze f$ contains only constants we can conclude.
\end{proof}

\paragraph{Proof of Lemma \ref {lem-FORW-rank}}
{\it If $\ruleset\in\forw$ there is a constant $\kfo$ such that, for any instance $\I$ and any trigger
 $(\erule,\match)$ from $\sochaseshort$
  with $\match(\fr{\erule}) \subseteq \adom{\I}$,
there is also a trigger $(\erule,\match')$ from 
$\sochaseshort $ such that $\match'_{|\fr{\erule}}=\ \match_{|\fr{\erule}}$ and  $\rank{f} \leq \kfo$ for all $f\in\match'(\mathsf{body}(\erule))$.}

\begin{proof} Assume $\ruleset\in\forw$.  
We denote by  $\query_{\mathsf{body}(\erule)}$ any conjunctive query whose atoms correspond to the atoms of $\mathsf{body}(\erule)$, for $\erule\in\ruleset$, and all variables are existentially quantified except for those in $\fr\erule$, which are the answer variables.
We know that the number of such queries is bounded by the cardinal of $\ruleset$.
We take for $\kfo$ the maximal number of breadth-first rewriting steps necessary to obtain a UCQ-rewriting from any $\query_{\mathsf{body}(\erule)}$ query. 

By the properties of breadth-first query rewriting (based on aggregated piece-unifiers \cite{DBLP:conf/rr/KonigLMT13}), we know that, for any instance $\I$, the certain answers to $\query_{\mathsf{body}(\erule)}$ on $(I, \ruleset)$ are exactly the answers to $\query_{\mathsf{body}(\erule)}$ on  $\ochasekshort{\kfo}$ and therefore on $\sochasekshort{\kfo}$. 
By definition of query-answer, there is a homomorphism $\match'$ from $\query_{\mathsf{body}(\erule)}$ to $\sochasekshort{\kfo}$ which maps each answer variable 
(recall, originated from a frontier variable) to a constant of $\I$ and each existentially quantified variable to a term of $\sochasekshort{\kfo}$. 

Let  $(\erule,\match)$ any trigger  from  $\sochaseshort$ with $\match(\fr{\erule}) \subseteq \adom{\I}$.
 Assume first that all values in $\I$ are constants.  
According to the previous observation, 
we know that 
there is $\match'$ from  $\query_{\mathsf{body}(\erule)}$ to $\sochasekshort{\kfo}$ with 
$\match'_{|\fr{\erule}}=\ \match_{|\fr{\erule}}$
and therefore a trigger $(\erule,\match')$ as desired.
Now, for the case where $\I$ also contains existentially quantified variables,  let $\I$ be an instance and let $\freeze{}$ be a bijective renaming of the variables of $\I$ by constant values that do not appear in $\I$ then, for all $i \geq 0$, there is an isomorphism from $\sochasekshort{i}$ to $\sochasek{\freeze{\I}}{\ruleset}{i}$
 that preserves the rank of facts.
 This implies that if $\I$ is any instance then
for any $f$ and $i$ holds
 $f \in \sochasekshort{i}$ iff freeze($f$) $\in \sochasek{\freeze{\I}}{\ruleset}{i}$ and we conclude.
\end{proof}

A closer look at our proof actually shows that FO-rewritability with respect to queries associated with rule bodies  is sufficient to derive Lemma \ref{lem-FORW-rank}. 

\paragraph{Proof of Theorem \ref{thm-FO-depth-O}}
{\it If $\ruleset\in\forwaf$ then 
 for all instance $\I$ and fact $f\in\ochaseshort $ 
 we have that 
 $\rank{f}\leq \valdepth{f}\times (\kaf + 1)+\kaf $
 with 
 $\kaf$ the bound provided by Lemma \ref{lem-FORWAF-rank}.}

\begin{proof}
We first show that since $\ruleset\in\forwaf$ then 
for all instance $\I$ and term $\t\in\adom{\ochaseshort}$ it holds that
  $\rank\t~\leq~\valdepth{\t} \times (\kaf + 1)$, where we recall that $\rank\t$ is the rank where $\t$ is introduced. 
  
By induction on the existential depth of $\t$. If $\valdepth{\t}=0$ then $\t\in\adom{\I}$ and thus $\rank{\t}=0$ also. 
Assume the property holds for $0\leq\valdepth{\t}\leq n$.
We show that it holds for $\valdepth{\t}= n+1$.
Let $(\erule,\match)$ be the trigger that generates $\t$.
Then, for all $\t_B\in \adom{\match(\mathsf{body}(\erule))}$, we know that $\valdepth{\t_B}\leq n$.
By inductive hypothesis, $\rank{\t_B}\leq n \times (\kaf +1) $. 
Since $\ruleset\in\forwaf$,
we can apply Lemma \ref{lem-FORWAF-rank} using as instance $\ochasekshort{k}$ where $k=n\times (\kaf+1)$. 
Hence for all $f_B \in \match(\mathsf{body}(\erule))$ it holds that  $\rank{f_B}\leq 
k+ \kaf$. 
 Thus $\rank{\t}\leq 
k\;+\;\kaf \;+\; 1
= (n + 1) \times (\kaf + 1) =
\valdepth{\t}\times (\kaf + 1)$.
To conclude the proof, since any fact $f\in\ochaseshort$ contains only terms $\t$ with $\rank{\t}~\leq~\valdepth{\t}\times (\kaf +1)$, we apply Lemma \ref{lem-FORWAF-rank} and we obtain $\rank{f}~\leq~\max\{\;\valdepth{\t}\;|\;\t\in\adom f\;\} \times (\kaf + 1) \; + \;  \kaf$.
\end{proof}

\paragraph{Proof of Theorem 
 \ref{thm-FO-depth-SO}}
{\it If $\ruleset\in\forw$
 then 
 for all instance $\I$ and fact $f\in\sochaseshort $ 
  we have that 
 $\rank{f}\leq \frdepth{f}\times (\kfo + 1)+\kfo $
 where 
 $\kfo$ is the bound provided by Lemma \ref{lem-FORW-rank}.
}

\begin{proof}

We first show that since $\ruleset$ is FO-rewritable, then 
for all instance $\I$ and term $\t\in\adom{\sochaseshort}$ it holds that
  $\rank{\t}~\leq~\frdepth{\t} \times (\kfo + 1)$.
By induction on the frontier depth of $\t$.
If $\frdepth{\t}=0$ then $\t\in\adom{\I}$ and thus $\rank{\t}=0$ also. 
Assume the property holds for $0\leq\frdepth{\t}\leq n$.
We show that it holds for $\frdepth{\t}= n+1$.
Let $(\erule,\match)$ be the trigger that generates $\t$.
By definition of  frontier depth, for all $\t_{fr}\in \adom{\match(\fr{\erule})}$, we know that $\frdepth{\t_{fr}}\leq n$.
By inductive hypothesis, $\rank{\t_{fr}}\leq n \times (\kfo +1)$.
Since $\ruleset\in\forw$, 
we can apply Lemma \ref{lem-FORW-rank} using as instance $\sochasekshort{k}$ where $k=n \times (\kfo +1)$. This gives us $\rank{\match(\mathsf{body}(\erule))}\leq 
k  \;+\; \kfo$. 
Thus $\rank{\t}\leq 
n\times (\kfo+1)\;+\; \kfo \;+\; 1
= (n + 1) \times (\kfo + 1) =
\frdepth{\t}\times (\kfo + 1)$.

Since any fact $f\in\sochaseshort$ contains only terms $\t$ with $\rank{\t}~\leq~\frdepth{\t}\times (\kfo +1)$, we use again Lemma \ref{lem-FORW-rank}, and obtain $\rank{f}~\leq~\frdepth{f}
\times (\kfo +1) + \kfo$.
\end{proof}

\paragraph{Proof of Theorem \ref{thm-bn-foaf-cto}}
{\it $\bno=\forwaf\cap \cto $.}

\begin{proof} ($\Rightarrow$) If $\ruleset\in \bno $ there is $k$ such that, for all $\I$,  $\ochasekshort{k}=\ochaseshort$.
Hence, there is $k$ such that for all $\I$ and all Boolean CQ $\query$ we have $\ruleset, \I \models \query$ (i.e., $\ochaseshort  \models \query$  iff $\ochasekshort{k} \models \query$ iff $\I \models \query^k$, where  $\query^k$ is the query obtained by $k$ steps of breadth-first rewriting from $\query$ and $\ruleset$, which implies the FO-rewritability of $\ruleset$. To conclude,  $\bno\subseteq \cto$ follows by definition.

 ($\Leftarrow$)
By Theorem \ref{thm-finitedepth-O}  and  \ref{thm-FO-depth-O}.
\end{proof}

\paragraph{Proof of Theorem \ref{thm-bn-fo-ctso}}
{\it $\bnso=\forw\cap \ctso $.}

\begin{proof} Identical to proof of Theorem \ref{thm-bn-foaf-cto} for the direct sense. The other direction holds by Theorem \ref{thm-finitedepth-SO}  and  \ref{thm-FO-depth-SO}.
\end{proof}

The next proposition leads to conclude that membership to $\bnso$, $\ctso$ and $\forw$ remains undecidable for FE-rules.

\begin{proposition}
\label{prop-datalog-fully-exist}
There is a translation $\psi$  from any KB $(\I, \ruleset)$ on a vocabulary $\Voc$, where $\ruleset$ is a set of existential rules, to a KB $(\psi(\I), \psi(\ruleset))$ on a vocabulary $\psi(\Voc)$, where $\psi(\ruleset)$ is a set of FE-rules, such that:
\\ (1) $\psi$ is injective, and 
\\(2) $\sochaseshort$ and $\sochaseIS{\psi(\I)}{\psi(\ruleset)}$ have the same rank, and 
\\(3) for any instance $I'$ on $\psi(\Voc)$, there is an instance $\psi(\I)$ such that $\sochaseIS{I'}{\psi(\ruleset)}$ and $\sochaseIS{\psi(\I)}{\psi(\ruleset)}$ have the same rank. 
\end{proposition}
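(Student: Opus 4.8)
The plan is to build $\psi$ by adding to every predicate one fresh ``witness'' position and then turning each rule into a set of FE-rules that reproduces the $\sochase$ rank exactly. Concretely, to each predicate $p$ of arity $n$ I associate a predicate $\hat p$ of arity $n+1$, I set $\psi(\I)=\{\hat p(\bar a,\ast)\mid p(\bar a)\in\I\}$ for a fixed constant $\ast$, and I translate every body atom by placing a \emph{fresh} variable in its witness position, so that bodies never constrain witnesses. The heads require care. For each datalog head atom $q(\bar t)$ of a rule $\erule=\body\to\head$ I emit a \emph{separate} rule $\hat\body\to\exists w\,\hat q(\bar t,w)$, which is an FE-rule whose frontier is exactly $\vars{\bar t}$. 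For the existential head atoms I emit a single rule $\hat\body\to\exists\bar z\,\exists\bar w\,\widehat{\head_F}\wedge\exists w_0\,\mathsf{k}_\erule(\fr\erule,w_0)$, where $\widehat{\head_F}$ fills each witness slot with a fresh existential and $\mathsf{k}_\erule$ is a fresh predicate used only to pin the frontier of this rule to $\fr\erule$. Every emitted rule has an existential variable in each head atom, hence $\psi(\ruleset)$ is a set of FE-rules.

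Property (1) is then immediate: $\psi$ acts atomwise on instances and rulewise on rules, and the fresh predicates $\mathsf{k}_\erule$ make distinct rules produce distinct translations, so $\psi$ is injective. For property (3), given $I'$ over $\psi(\Voc)$ I would project it to $\I:=\pi(I')$, where $\pi$ drops every witness position and discards the inert $\mathsf{k}_\erule$-atoms, and then claim $\sochaseIS{I'}{\psi(\ruleset)}$ and $\sochaseIS{\psi(\I)}{\psi(\ruleset)}$ have the same rank. This holds because witness values are inert: a body atom matches $\hat p(\bar a,\cdot)$ regardless of its last argument, and two facts $\hat p(\bar a,c_1),\hat p(\bar a,c_2)$ enable exactly the same triggers with the same frontier images, so collapsing all witnesses to $\ast$ (which is what passing through $\pi$ and then $\psi$ achieves) changes neither which triggers fire nor the rank at which they fire.

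The heart of the argument is property (2), namely that $\sochaseshort$ and $\sochaseIS{\psi(\I)}{\psi(\ruleset)}$ have the same rank. I would prove, by induction on $i$, that the two chases process the same triggers at the same ranks, using $\pi$ to match facts. Two points make the induction go through. First, the existential part is handled by a rule whose frontier is forced to be $\fr\erule$ by the $\mathsf{k}_\erule$-atom, so the existential variables $\bar z$ are named by the \emph{same} frontier restriction as in $\ruleset$ and are therefore generated at exactly the same ranks and with the same multiplicity (the $\mathsf{k}_\erule$-atoms live over a fresh predicate that is never reused in a body, hence they are produced once per frontier image at that same rank and trigger nothing). Second, each datalog head atom $q(\bar t)$ is emitted as its own rule with frontier $\vars{\bar t}$, so its witnessed copy $\hat q(\bar t,w)$ is produced once per $\bar t$-image, exactly mirroring the datalog collapse of $q(\bar t)$ in $\ruleset$.

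The main obstacle, and the reason the construction cannot simply add a witness to each head atom of the \emph{undivided} rule, is precisely the $\sochase$ phenomenon flagged in the footnote on the $\mathsf{DF}$-decomposition. If a datalog head atom $q(\bar t)$ with $\vars{\bar t}\subsetneq\fr\erule$ were witnessed inside a rule of frontier $\fr\erule$, the witness would be named by the full frontier restriction, and a new frontier image would create a genuinely new fact $\hat q(\bar t,w)$ even when the original rule merely reproduces an already-derived atom $q(\bar t)$; this over-generation can strictly increase the chase rank. Splitting datalog atoms down to their own frontier removes the spurious naming, while pinning the existential part's frontier with $\mathsf{k}_\erule$ prevents the opposite error of \emph{shrinking} a frontier, which (as the same footnote shows) would turn a non-terminating $\sochase$ into a terminating one. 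Establishing that these two opposing adjustments are simultaneously correct, i.e.\ that the trigger-by-trigger, rank-by-rank correspondence survives both the splitting and the frontier-pinning, is the delicate step I expect to spend the most effort on.
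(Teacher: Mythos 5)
Your construction rests on the same core device as the paper's: pad every predicate with one extra ``witness'' position, fill it with a fresh existential variable in every head atom (so every rule becomes an FE-rule) and with a fresh non-frontier variable in every body atom (so witnesses never constrain matching or frontier images); point (3) is likewise handled in both proofs by projecting the witness position away and arguing it is inert. Where you diverge is that the paper does \emph{not} decompose: $\psi(\erule)$ is a single rule obtained by witnessing the whole head, whose frontier automatically equals $\fr{\erule}$, and the proof is a rank-by-rank induction pairing each original trigger $(\erule,\match)$ with the corresponding trigger of $\psi(\erule)$ and each produced atom $p(\bar t)$ with $p^+(\bar t, z_{(\psi(\erule),\match_{|\fr{\erule}})})$. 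Your extra machinery --- splitting each datalog head atom $q(\bar t)$ into its own rule with frontier $\vars{\bar t}$, and pinning the frontier of the existential part with $\mathsf{k}_\erule(\fr{\erule},w_0)$ --- targets a real phenomenon that the paper's simpler translation does not literally neutralize: with whole-rule witnessing, a datalog head atom's witness is named by the \emph{full} rule frontier, so a trigger with a new frontier image creates a genuinely new witnessed copy of an atom that is not new in the original chase (e.g.\ $s(x,y)\rightarrow q(x)\wedge t(y)$ fired on a new pair with both $q$- and $t$-atoms already present), which can push the stabilization rank of the translated chase one step past that of the original. That discrepancy cannot cascade (witness positions are non-frontier in bodies, so the spurious copies enable no new frontier images), so it is harmless for what the proposition is used for --- transferring (un)decidability of $\ctso$, $\bnso$ and $\forw$ to FE-rules only needs finiteness and uniform boundedness of the rank to be preserved --- but your refinement is what is needed if ``same rank'' is to hold on the nose; in that sense your construction buys exactness at the price of more rules and a more involved correspondence. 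Two cautions. First, emit the $\mathsf{k}_\erule$-rule only when the existential head part $\head_F$ is nonempty; for a pure datalog rule with a multi-atom head it would reintroduce precisely the over-generation you are trying to eliminate (a new frontier image would mint a new $\mathsf{k}_\erule$-fact while the original rule produces nothing new). Second, the trigger-by-trigger, rank-by-rank induction you defer is the entire content of the proof: the paper's appendix carries it out for the unsplit version and is a usable template, but you carry the additional obligations of checking that each split datalog rule fires at the same rank as the original trigger (it does, because some witnessed copy of each body atom first appears at the rank of the atom itself) and that a split rule produces a new fact exactly when the original datalog atom is new (this is where the frontier $\vars{\bar t}$ is essential).
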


The proposition leads directly to the undecidability of $\bnso$ and $\ctso$for FE-rules. Concerning the undecidability of $\forw$ for FE-rules, we take for $\ruleset$ a set of datalog rules. Then $\ruleset$ is in $\ctso$ if and only if $\psi(\ruleset)$ is in $\ctso$. Since every datalog set  is $\ctso$, $\psi(\ruleset)$ is also $\ctso$. Now, consider the (undecidable) problem of whether $\ruleset$ is (uniformly) bounded. We have that $\ruleset$ is bounded iff $\psi(\ruleset)$ is bounded, which amounts to asking if $\psi(\ruleset)$ is FO-R (as we already know it is in $\ctso$).


%

\begin{proof} (of proposition \ref{prop-datalog-fully-exist}). 
Take a vocabulary $\Voc=(\Pred,\Const)$ and define the set $\Pred^+$ where each predicate $p\in\Pred$ of arity $k$ is replaced by a predicate $p^+$ of arity $k+1$. 
Let $\psi$ be a transformation defined as follows. First, $\psi(\Voc) = (\Pred^+,\Const)$. 
Then, given an atom $\atom=p(\t_1,\dots,\t_k)$ then $\psi(\atom)=p^+(\t_1,\dots,\t_k,z_\atom)$ where $z_\atom$ is a fresh variable.   
Let  $\erule=\body_1(\bar x,\bar y)\dots \body_n(\bar x,\bar y)\rightarrow \exists \bar z \head_1(\bar x,\bar z),\dots,\head_m(\bar x,\bar z)$  be a rule, 
then
$\psi(\erule)=\psi(\body_1),\dots,\psi(\body_n)\rightarrow \exists z_{\head_1}\dots z_{\head_m}\psi(\head_1)\dots \psi(\head_m)$.
Finally, $\psi(\I)=\bigcup_{\atom\in\I}\psi(\atom)$ and $\psi(\ruleset)=\bigcup_{\erule\in\ruleset}\psi(\erule)$.   

Obvliously, $\psi$ is injective (Point (1)). 
  
To prove the point (2), we show that for each fact $f=p(\t_1,\dots,\t_n)\in\sochasekshort{i}$ 
generated by a trigger $(\erule,\match)$  it holds $p^+(\t_1,\dots,\t_n,z_{(\psi(\erule),\match_{|\fr{\erule}})})\in\sochasek{\psi(\I)}{\psi(\ruleset)}{i}$, and vice-versa. 

We focus on the direction $\Rightarrow)$ as the direction $\Leftarrow)$ is similar.
By induction on the rank $i$ of the $\sochase$.
If $i=0$ then by definition $f \in\I$ implies $\psi(f)\in\psi(\I)$. Assume that the property holds for  $0\leq i\leq n$. We show that it holds for rank $n+1$.
Let $f$ be any atom of rank $n+1$ produced by the trigger $(\erule,\match)$. 
This means that for all body atom $f_B=p(x_1,\dots,x_k)\in \mathsf{body}(\erule)$ we know that 
$\match(f_B)=p(\t_1,\dots,\t_n)\in\sochasekshort{n}$. Hence, by induction 
$\psi(\match(f_B))=
p^+(\t_1,\dots,\t_n,z^+)\in\sochasek{\psi(\I)}{\psi(\ruleset)}{n}$ where $z^+=z_{\match(f_B)}$ if $\match(f_B)\in\I$, or $z^+=z_{(\psi(\erule'),\match'_{|\fr{\erule'}})}$ if $\match(f_B)$ has been generated by a trigger $(\erule',\match')$.
Then, the trigger $(\psi(\sigma),\match\cup\bigcup_{f_B}\{z_{f_B}\mapsto z^+\})$
is applicable and produces 
$\psi(\match(f))$. 
Since $f$ is of rank $n+1$ there does not exist another rule application that could have generated the same atom at a previous rank, and the same holds for its image.

For point (3), we build a transformation $\phi$ from any instance $\I'$ on $\psi(\Voc)$ to an instance $\I$ on $\Voc$
such that $\sochaseIS{I'}{\psi(\ruleset)}$  and $\sochaseshort$ have the same rank. 
[Note that the proof does not follow exactly point (3) here: we consider directly $\I$ instead of $\psi(\I)$]. 
 
 The transformation $\phi$ assigns to each atom $\atom=p^+(\t_1,\dots,\t_k,z)$ on $\psi(\Voc)$ the atom $\phi(\atom)=p(\t_1,\dots,\t_k)$.   
Let  $\I=\phi(\I')=\bigcup_{\atom\in\I'}\phi(\atom)$.
We show that  for each  fact $f=p^+(\t_1,\dots,\t_n,z)\in\sochasek{\I'}{\psi(\ruleset)}{i}$ 
generated by a trigger $(\psi(\erule),\match)$,  it holds that $p(\t_1,\dots,\t_n)\in\sochasek{\I}{\ruleset}{i}$.
By induction on the rank $i$ of the $\sochase$.
If $i=0$ then by definition $f \in\I'$ implies $\phi(f)\in\I$. Assume that the property holds for  $0\leq i\leq n$. We show that it holds for rank $n+1$.
Let $f$ be any atom of rank $n+1$ produced by the trigger $(\psi(\erule),\match)$. 
This means that for all body atom $f_B=p^+(v_1,\dots,v_k,z)\in \mathsf{body}(\psi(\erule))$ we know that 
$\match(f_B)\in\sochasek{\I'}{\psi(\ruleset)}{n}$. Hence, by induction 
$\phi(\match(f_B)) \in\sochasekshort{n}$. Then, the trigger $(\erule,\match')$
 (where $\match'$ is the appropriate restriction of $\match$) is applicable and produces 
$\phi(\match(f))$.  Since $f$ is of rank $n+1$  there does not exist another rule application that could have generated the same atom at a previous rank; as the chase is semi-oblivious,  and the 
last component of a predicate never occurs in the frontier,
$\phi(\match(f))$ is also of rank $n+1$.

So, let an instance $\I$ on $\Voc$. Let us note that $\phi(\psi(I))=I$. By what precedes $\sochaseshort$ and $\sochaseIS{\psi(I)}{\psi(\ruleset)}$ have the same rank.
Now, let an instance $\I'$ on $\psi(\Voc)$ and $I =\phi(I')$: by what precedes the rank of $\sochaseshort$ is at least the rank of  $\sochaseIS{\I'}{\psi(\ruleset)}$. 
Furthermore, by embedding $\psi(I)$ in $\I'$, we can by using similar arguments prove that  the rank of  $\sochaseIS{\I'}{\psi(\ruleset)}$ is at least the rank of $\sochaseshort$. So,   $\sochaseIS{\I'}{\psi(\ruleset)}$  and $\sochaseshort$ have the same rank.

\end{proof}

\paragraph{Proof of Theorem \ref{th-k-bounded}}
{\it The \emph{$k$-boundedness} problem is:
\begin{itemize}
\item  in 2Exptime on general existential rules for the $\ochase$ and $\sochase$;
\item co-NExptime-complete on datalog;
\item in co-NExptime on FE-rules for the $\ochase$.
 \end{itemize}}

\begin{proof}
The upper bound results mostly come from \cite{RR18} \cite{DBLP:journals/corr/abs-2004-10030}.
Indeed, from these papers, a ruleset is $k$-bounded for the $\soochase$ iff  the $\soochase$ stops within $k$ steps for  instances of size at most $b^{k+1}$, with $b$ the maximum number of atoms in a rule body.
So, to  disprove $k$-boundedness, it suffices to guess a  breadth-first derivation from an instance of size at most $b^{k+1}$ to a fact of rank $k+1$, which can be done in NExptime for datalog.
This  gives also the 2Exptime upper bound by checking exhaustively for  each of these instances that there is no  breadth-first derivation of  depth $k+1$.
When the ruleset is fully existential, it can be proven  that a rule set is  not $k-$bounded  for  the $\ochase$ iff  there exists  a non-necessarly breadth-first partial derivation of depth $k+1$ from the critical  instance.
So, by guessing such a derivation, we get also a co-NExptime decision procedure on FE-rules  for the $\ochase$.
See  also \cite{gallois:tel-02445754} for detailed proofs of the above results. 

Co-NExptime-hardness of
$k$-boundedness for datalog rules is proven by reduction from the co-NExptime-hard inclusion problem of non-recursive Boolean datalog queries \cite{BenediktG10}. 
 Let $Q_1, Q_2$ two non-recursive Boolean datalog queries and $P_0$ (resp. $P_0^2$) their respective distinguished 0-ary predicate.
 As they are non-recursive,  $Q_1$ (resp. $Q_2$) is $k_1$-(resp.$ k_2$) bounded with $k_1$ (resp. $k_2$) the number of predicates in $Q_1$ (resp. $Q_2$).
 Let $p=max(k_1,k_2)+2$. Let us note that the size of $p$ encoded in unary is  bounded by the size of $(Q_1, Q_2)$.
Let us define a new ruleset $Q'_1 \cup Q'_2$:
 $Q'_1$ (resp.  $Q'_2$) is obtained from $Q_1$ (resp. $Q_2$ ) by adding  0-ary predicates $P_i$ and rules $P_{i-1}\rightarrow P_{i} $  (resp. $P_0 \rightarrow P_i$) with $1 \le i \le p$.
The size of $Q'_1 \cup Q'_2$ is linear w.r.t. the size of $(Q_1, Q_2)$. We will prove that $Q'_1 \cup Q'_2$  is $p-1$-bounded iff $Q_1$ is contained in $Q_2$

 Let us  first suppose that $Q_1$ is contained in $Q_2$. Let $\I$ be any instance. 
 If $P_0$  can be derived from  $(\I,Q_2)$,  all the $P_i$  are generated in at most  $k_2+1$  steps and so  the breadth-first chase 
 for $(\I, Q'_1 \cup Q'_2)$ stops after $max(k_1,k_2+1)$  steps.
 Otherwise, $P_0$  can neither be derived from  $(\I,Q_1)$ and the breadth-first chase for $(\I, Q'_1 \cup Q'_2)$ stops after 
 $max(k_1,k_2)$  steps.
 So, in both cases, $Q'_1 \cup Q'_2$  is $(p-1)$-bounded.  

If  $Q_1$ is not contained in $Q_2$, there exists $\I$ such that $P_0$ can be derived from $ (\I,Q_1)$ whereas $P_0^2$ can not be derived from  $(\I,Q_2)$.
 As $P_0^2$ is not generated by the $Q'_2$ part,  $P_k$ will be generated by the $Q'_1$ part, and so  the breadth-first chase for $(\I, Q'_1 \cup Q'_2)$ will need at least $p$ steps.
   
So $Q'_1 \cup Q'_2$  is $p-1$-bounded iff $Q_1$ is contained in $Q_2$. 
\end{proof}

\end{document}